\documentclass{article}

% if you need to pass options to natbib, use, e.g.:
\PassOptionsToPackage{numbers, compress}{natbib}
% before loading neurips_2025

% ready for submission
% \usepackage{neurips_2025}

% to compile a preprint version, e.g., for submission to arXiv, add add the
% [preprint] option:
\usepackage[preprint]{neurips_2025}

% to compile a camera-ready version, add the [final] option, e.g.:
%     \usepackage[final]{neurips_2025}

% to avoid loading the natbib package, add option nonatbib:
   % \usepackage[nonatbib]{neurips_2025}

\usepackage[utf8]{inputenc} % allow utf-8 input
\usepackage[T1]{fontenc}    % use 8-bit T1 fonts
\usepackage{hyperref}       % hyperlinks
\usepackage{url}            % simple URL typesetting
\usepackage{booktabs}       % professional-quality tables
\usepackage{amsfonts}       % blackboard math symbols
\usepackage{nicefrac}       % compact symbols for 1/2, etc.
\usepackage{microtype}      % microtypography
\usepackage{xcolor}         % colors
\usepackage{amsmath,amssymb,amsfonts}
\usepackage{amsthm}
\usepackage{algorithm}
\usepackage{graphicx}
\usepackage{textcomp}
\usepackage{booktabs}
\usepackage{subcaption}
\usepackage[noend]{algpseudocode}
\usepackage{csquotes}
\usepackage{multirow}
\usepackage{wrapfig}
\usepackage{tabularx}
% \usepackage[style=ieee, url=false, doi=false, natbib=false, mincitenames=1, maxcitenames=1, maxbibnames=99]{biblatex}
% \addbibresource{main.bib}

% \newtheorem{definition}{Definition}
% \newtheorem{Proposition}{Proposition}
\newtheorem{proposition}{Proposition}

\title{Wasserstein Barycenter Soft Actor-Critic}

% The \author macro works with any number of authors. There are two commands
% used to separate the names and addresses of multiple authors: \And and \AND.
%
% Using \And between authors leaves it to LaTeX to determine where to break the
% lines. Using \AND forces a line break at that point. So, if LaTeX puts 3 of 4
% authors names on the first line, and the last on the second line, try using
% \AND instead of \And before the third author name.

\author{%
  Zahra.~Shahrooei \\
  Department of Mechanical Engineering\\
  Rochester Institute of Technology\\
  Rochester, NY 14623 \\
  \texttt{zs9580@rit.edu} \\
  % examples of more authors
  \And
  Ali.~Baheri \\
  Department of Mechanical Engineering\\
  Rochester Institute of Technology\\
  Rochester, NY 14623 \\
  \texttt{akbeme@rit.edu} \\
  % Coauthor \\
  % Affiliation \\
  % Address \\
  % \texttt{email} \\
  % \AND
  % Coauthor \\
  % Affiliation \\
  % Address \\
  % \texttt{email} \\
  % \And
  % Coauthor \\
  % Affiliation \\
  % Address \\
  % \texttt{email} \\
  % \And
  % Coauthor \\
  % Affiliation \\
  % Address \\
  % \texttt{email} \\
}

\begin{document}

\maketitle

\begin{abstract}

Deep off-policy actor-critic algorithms have emerged as the leading framework for reinforcement learning in continuous control domains. However, most of these algorithms suffer from poor sample efficiency, especially in environments with sparse rewards. In this paper, we take a step towards addressing this issue by providing a principled directed exploration strategy. We propose \textit{Wasserstein Barycenter Soft Actor-Critic} (WBSAC) algorithm, which benefits from a pessimistic actor for temporal difference learning and an optimistic actor to promote exploration. This is achieved by using the Wasserstein barycenter of the pessimistic and optimistic policies as the exploration policy and adjusting the degree of exploration throughout the learning process. We compare WBSAC with state-of-the-art off-policy actor-critic algorithms and show that WBSAC is more sample-efficient on MuJoCo continuous control tasks.
\end{abstract}

\section{Introduction}
Deep reinforcement learning (DRL) has found applications across diverse fields, including autonomous driving, robotics, healthcare, finance \cite{deng2016deep}, and gaming \cite{kiran2021deep,hambly2023recent}. Among the various DRL methodologies, actor-critic frameworks \cite{silver2014deterministic, lillicrap2015continuous, fujimoto2018addressing} have demonstrated significant success in continuous control tasks and have gained widespread adoption within the control systems and robotics fields. Despite their successes, these methods still face the challenge of high sample complexity, which results from maximizing the lower bound of the Q-function to prevent overestimation and reliance on basic exploration techniques. In practice, the state-of-the-art actor-critic algorithms, such as deep deterministic policy gradient (DDPG) \cite{lillicrap2015continuous} and its successor, twin delayed DDPG (TD3) \cite{fujimoto2018addressing}, inject symmetric noise directly into the action space to encourage exploration. In contrast, soft actor-critic (SAC) encourages stochasticity through entropy regularization. However, these heuristic exploration approaches are not sample efficient and can limit performance, particularly in sparse-reward or high-dimensional environments \cite{hao2023exploration}.

Moving beyond simple noise injection or basic entropy regularization towards more directed or uncertainty-aware exploration strategies, several approaches have been proposed for actor-critic methods to enhance the performance without causing overestimation and instability \cite{zheng122018self,lee2021sunrise,januszewski2021continuous,likmeta2023wasserstein,ciosek2019better}. Most of these efforts employ the principle of optimistic in the face of uncertainty (OFU) \cite{ziebart2010modeling} and quantify epistemic uncertainty as a signal for exploring promising areas. These works rely on a conservative actor that outputs a pessimistic policy for temporal difference learning and consider an optimistic policy to facilitate exploration. To prevent instability and avoid sub‑optimal behavior, the optimistic policy is typically regularized by a Kullback–Leibler (KL) divergence constraint that keeps it close to the pessimistic policy. The other existing works use multiple actors \cite{zhang2019ace,lyu2022efficient,li2023multi,kumar2018predicting,ren2021probabilistic,nauman2025decoupled,yang2024efficient,whitney2021decoupled, li2023realistic} to more explicitly decouple the task of temporal difference learning from exploration. While these works have shown successful performance in improving the sample efficiency, establishing a balance between exploration and exploitation is still needed to avoid divergence and suboptimal performance.

\begin{wrapfigure}{r}{0.5\textwidth}
    \centering
    \includegraphics[width=0.5\textwidth]{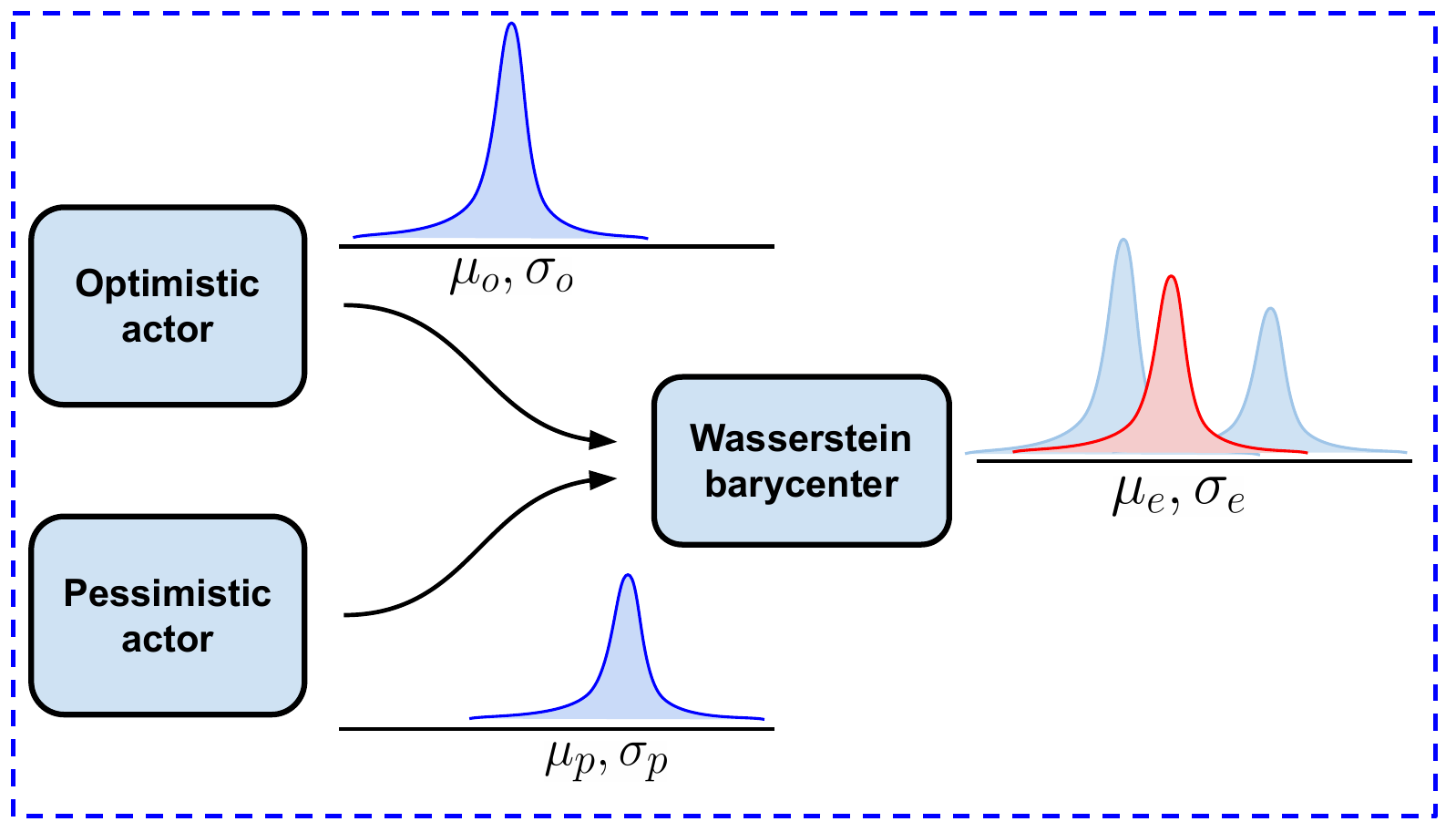} % Adjust width as needed
    \caption{\scriptsize WBSAC uses Wasserstein barycenter of optimistic and pessimistic policies as the exploration policy.}
    \label{fig:exp}
\end{wrapfigure}

To mitigate the aforementioned limitations, we present Wasserstein barycenter soft actor-critic (WBSAC), which uses two distinct actors with different objectives. The optimistic actor is trained to maximize the Q-function upper bound, while the pessimistic actor maximizes a Q-function lower bound. Unlike many approaches that rely on KL-divergence constraints to keep exploration close to a target policy, WBSAC forms the exploration policy as the Wasserstein barycenter of distinct pessimistic and optimistic policies, as shown in Figure~\ref {fig:exp}. This allows for a controlled and geometrically meaningful balance, where the influence of the optimistic policy gradually increases, rather than being strictly constrained. This adaptive balance improves exploration capability and performance without introducing overestimation bias. Using Wasserstein barycenter for blending optimistic and pessimistic strategies and controlling the degree of exploration is the novelty of our work. The main contributions of this paper are:

\begin{itemize}
  \item We present WBSAC, an off‐policy actor–critic framework with dual actors to enhance exploration capability and performance. WBSAC makes the fusion of pessimistic and optimistic strategies possible in a unique way and allows for a controlled, progressive shift in exploration strategy, starting from conservative behavior towards an optimistic strategy.
  
  \item We theoretically show that WBSAC enjoys high differential entropy, which leads to more effective learning and improved state-action coverage.
  \item We empirically demonstrate that WBSAC outperforms state‐of‐the‐art baselines in both sample efficiency and overall performance on several Mujoco tasks and sparse reward case studies.
\end{itemize}

\section{Related work}
\label{sec:Evaluation}

Significant research has been devoted to optimistic exploration strategies to improve sample efficiency by using Q-value upper bounds. These approaches include analytically derived transformations of pessimistic policies or employing separate exploration-focused actor networks. Directed exploration approaches that use an optimistic policy
for exploration and a conservative policy for
exploitation, provide different strategies for blending pessimism with optimism \cite{zheng122018self,lee2021sunrise,januszewski2021continuous,likmeta2023wasserstein,ciosek2019better,moskovitz2021tactical}. For instance, optimistic actor-critic (OAC) \cite{ciosek2019better} models the optimistic policy by combining conservative
policy with the linear approximation of the Q-function upper bound, and tactical optimism and pessimism (TOP) \cite{moskovitz2021tactical} determines the appropriate level of optimism/pessimism during training by framing this selection process as a multi-armed bandit problem. In these works, both optimistic and pessimistic policies are extracted from a conservative actor, and by the KL divergence constraint, the exploration policy stays close to the pessimistic policy. WBSAC is different from these works in both modeling the optimistic policy with a separate actor and the exploration policy. Furthermore, unlike WBSAC, both TOP and Wasserstein actor-critic (WAC) \cite{likmeta2023wasserstein} use a distributional critic.

Approaches with multiple actors are similar to WBSAC \cite{nauman2025decoupled,lyu2022efficient,li2023multi, pan2020softmax,lyu2023value, chen2024double,nauman2024bigger}. WBSAC distinguishes itself from decoupled actor-critic (DAC) \cite{nauman2025decoupled}, which optimizes the optimistic actor towards a KL-regularized Q-value upper-bound by introducing the Wasserstein barycenter of the policies as the exploration policy. Unlike WBSAC, which concentrates on stochastic policies, DARC \cite{lyu2022efficient} implements the dual actor setup for deterministic policies. It promotes exploration by selecting the action associated with the higher Q‑value from two separate actors and reduces estimation bias through critic regularization. However, basing action selection on the maximal Q‑value can amplify errors arising from inaccurate value estimates. Bigger, regularized, optimistic (BRO) \cite{nauman2024bigger} uses dual policy optimistic exploration and non-pessimistic quantile Q-value approximation to make a balance between exploration and exploitation. Multi-actor mechanism (MAM) \cite{li2023multi} proposes a multi-actor framework with multiple action choices within a single state to optimize policy selection.

Optimal transport has been integrated to RL in several studies \cite{baheri2023risk,queeney2023optimal,metelli2019propagating,shahrooei2024optimal, baheri2025wave,baheri2024synergy,baheri2023understanding}. For instance, \citet{metelli2019propagating} proposes a novel approach called Wasserstein Q-learning (WQL), which considers Bayesian posterior Q and target distributions and applies Wasserstein barycenters to model and propagate uncertainty. Their method demonstrated improved exploration and faster learning in tabular domains compared to classic RL algorithms. \citet{likmeta2023wasserstein} extends this idea to actor-critic algorithms and suggests a distributional critic, which minimizes the Wasserstein distance between Q and target distributions. \citet{baheri2025wave} introduces an actor-critic algorithm called Wasserstein adaptive value estimation (WAVE), which enhances stability through an adaptive critic regularization term based on the Wasserstein distance between consecutive Q-distributions.

\section{Preliminaries}
\label{sec:prelim}

\noindent{\textbf{Markov Decision Processes and Q-functions. }}A Markov Decision Process (MDP) provides a formal framework for sequential decision-making problems in RL. An MDP is defined by the tuple $\mathcal{M} = (\mathcal{S}, \mathcal{A}, P, R, \gamma)$, where $\mathcal{S}$ is the set of states, $\mathcal{A}$ is the set of actions, $P : \mathcal{S} \times \mathcal{A} \to \Delta(\mathcal{S})$ is the state transition probability function (with $\Delta(\mathcal{S})$ being the set of probability distributions over $\mathcal{S}$), $R : \mathcal{S} \times \mathcal{A} \to \mathbb{R}$ is the reward function, and $\gamma \in [0,1)$ is the discount factor.  The behavior of an agent is described by a policy $\pi : \mathcal{S} \to \Delta(\mathcal{A})$, which maps states to a probability distribution over actions. The performance of a policy is often evaluated using value functions. The action-value function (or Q-function) for a policy $\pi$, denoted $Q^{\pi}(s, a)$, is the expected discounted return starting from state $s$, taking action $a$, and subsequently following policy $\pi$: $Q^{\pi}(s, a) = \mathbb{E}_{\pi} \left[ \sum_{t=0}^{\infty} \gamma^t R(s_t, a_t) \mid s_0 = s, a_0 = a \right]$. The primary goal in many RL problems is to find an optimal policy $\pi^*$ that maximizes the expected return. This optimal policy satisfies $\pi^*(s) = \arg\max_{a \in \mathcal{A}} Q^*(s,a)$ for all $s \in \mathcal{S}$, where $Q^*(s,a)$ is the optimal action-value function.

\noindent{\textbf{Soft Actor-Critic. }}Soft actor-critic (SAC) \cite{haarnoja2018soft} is an off-policy actor-critic algorithm based on the maximum entropy RL framework \cite{ziebart2008maximum}. This framework encourages exploration by augmenting the expected return objective with an entropy term for the policy. SAC employs a stochastic policy $\pi_\phi$, typically a neural network parameterized by $\phi$. To mitigate overestimation bias, it utilizes two separate Q-function networks, $Q_{\theta_1}$ and $Q_{\theta_2}$ (parameterized by $\theta_1$ and $\theta_2$), and uses the minimum of their predictions for policy improvement and target calculation. Correspondingly, two target Q-networks, $\hat{Q}_{\hat{\theta}_1}$ and $\hat{Q}_{\hat{\theta}_2}$, are maintained and updated via polyak averaging of the Q-network parameters $\hat{\theta}_i \leftarrow \tau \theta_i + (1-\tau)\hat{\theta}_i$, where $\tau \ll 1$. The Q-functions $Q_{\theta_i}$ are trained to minimize the soft Bellman residual. For each critic $Q_{\theta_i}$, the loss function is:
\begin{equation}
\mathcal{L}_Q(\theta_i) = \mathbb{E}_{(s_t, a_t, r_t, s_{t+1}) \sim \mathcal{D}}\left[\left(Q_{\theta_i}(s_t, a_t) - y_t \right)^2\right], \quad i \in \{1,2\}
\end{equation}
where the target value $y_t = r_t + \gamma \left( \min_{j=1,2} \hat{Q}_{\hat{\theta}_j}(s_{t+1}, a'_{t+1}) - \alpha \log \pi_\phi(a'_{t+1}|s_{t+1}) \right), \quad a'_{t+1} \sim \pi_\phi(\cdot|s_{t+1})$ where $\alpha$ is the entropy temperature coefficient, and $\mathcal{D}$ is the replay buffer. The policy $\pi_\phi$ is optimized to maximize the expected future return, including the entropy term. This is achieved by minimizing the following loss:

\begin{equation}
\mathcal{L}_{\pi}(\phi)
= \mathbb{E}_{s_t \sim \mathcal{D},\, a_t \sim \pi_\phi}
\Bigl[
\alpha\,\log\pi_\phi(a_t \mid s_t)
\;-\;
\min_{j=1,2}Q_{\theta_j}(s_t, a_t)
\Bigr]
\end{equation}

The entropy temperature $\alpha$ can be automatically tuned to balance the reward and entropy terms. The loss function for $\alpha$ is:
\begin{equation}
\mathcal{L}_{\alpha}(\alpha) = \mathbb{E}_{s_t \sim \mathcal{D}, a_t \sim \pi_\phi(\cdot|s_t)} \left[ -\alpha (\log \pi_\phi(a_t|s_t) + \mathcal{H}_0) \right]
\end{equation}
where $\mathcal{H}_0$ is the target entropy.

\noindent{\textbf{Wasserstein Barycenter. }}The Wasserstein distance is a key metric from optimal transport theory to quantify the difference between probability distributions. For a metric space \((\mathcal{X}, d)\) equipped with the Borel \(\sigma\)-algebra \(\mathcal{B}(\mathcal{X})\), let \(\mu\) and \(\nu\) be probability measures defined on this space. Given a proper cost function \(d(x, y)\), the \(p\)-Wasserstein distance \cite{villani2009optimal} between \(\mu\) and \(\nu\) is defined as:
\begin{equation}
W_p(\mu, \nu) = \left( \inf_{\pi \in \Pi(\mu, \nu)} \int_{\mathcal{X} \times \mathcal{X}} d(x, y)^p \, d\pi(x, y) \right)^{\frac{1}{p}}, \quad p \geq 1
\end{equation}
where \(\Pi(\mu, \nu)\) is the set of all couplings (joint probability measures) on the product space \(\mathcal{X} \times \mathcal{X}\) with marginals \(\mu\) and \(\nu\). In this work, we take the cost function to be the Euclidean norm, $d(x,y) = \|x - y\|_2$, and set $p = 2$. Building upon this notion of distance, Wasserstein barycenters are widely used for capturing the average of probability distributions in a geometrically meaningful way. Given a set of probability measures \(\{\mu_i\}_{i=1}^n\) defined on a metric space \((\mathcal{X}, d)\), the Wasserstein barycenter \cite{agueh2011barycenters} represents a central probability distribution that minimizes the weighted sum of Wasserstein distances to each distribution in the set. Precisely, given weights \(\{\xi_i\}_{i=1}^n\) satisfying \(\sum_{i=1}^n \xi_i = 1\) and \(\xi_i \geq 0\), the Wasserstein barycenter \(\bar{\mu}\) is defined as:
\begin{equation}
\bar{\mu} = \arg \min_{\mu} \sum_{i=1}^n \xi_i W_2(\mu, \mu_i)^2
\end{equation}

\section{Wasserstein barycenter soft
actor-critic}
\label{sec:WBSAC}
Our WBSAC algorithm maintains the core components of SAC, including dual critic networks and their targets for Q-value estimation, but distinguishes itself by employing two specialized actor networks and dynamically blending their outputs via Wasserstein barycenter to form the exploration policy.

\begin{algorithm*}[!t]
\caption{Wasserstein Barycenter Soft
Actor-Critic (WBSAC) \label{alg:WassersteinAC}}
\textbf{Input:} exploration schedule parameter $\lambda$, target update rate $\tau$, learning rates $\alpha_{\theta}$, $\alpha_{\pi_o}$, $\alpha_{\pi_p} $, $\alpha_\alpha$, uncertainty bonus parameter $\beta_o$\\
Initialize critic networks $Q_{\theta_1}$, $ Q_{\theta_2}$ with random parameters $\theta_1$, $\theta_2$\\
Initialize target networks $\theta_{1}^{'}\leftarrow \theta_{1}$, $\theta_{2}^{'}\leftarrow \theta_{2}$\\
Initialize actor networks with random parameters $\phi$ and $\varphi$\\
Initialize replay buffer $\mathcal{D}$
\begin{algorithmic}[1]
\For{each iteration}
    \For{each environment step}
        % \State Compute Wasserstein barycenter of $\pi_p(\phi, s_t)$ and $\pi_o(\varphi, s_t)$
        \State Obtain $\pi_p(\cdot|s_t)$ from pessimistic actor network
        \State Obtain $\pi_o(\cdot|s_t)$ from optimistic actor network
        \State Compute exploration policy $\pi_e(\cdot|s_t)$ according to Eq. \ref{explorationpolicy} with weights $\xi_p, \xi_o$
        \State $a_t \sim \pi_e(a_t | s_t)$
        \Comment Sample action from the exploration policy
        \State $s_{t+1} \sim p(s_{t+1} | s_t, a_t)$
        \Comment Sample transition from the environment
        \State $\mathcal{D} \leftarrow \mathcal{D} \cup \{(s_t, a_t, r(s_t, a_t), s_{t+1})\}$
        \Comment Store the transition in the replay buffer
        \State Update Wasserstein barycenter weights $\xi_o$ and $\xi_p$
    \EndFor

    \For{each gradient step}
        \State Sample mini-batch of $N$ transitions $(s_t, a_t, r(s_t, a_t), s_{t+1})$ from $\mathcal{D}$
        \State $\theta_i \leftarrow \theta_i - \alpha_{\theta} \nabla_{\theta_i} \mathcal{L}_Q(\theta_i) \quad \text{for} \quad i \in \{1, 2\}$
        \Comment{Update critic parameters}
        \State $\varphi \leftarrow \varphi - \alpha_{\pi_o} \nabla_\varphi \mathcal{L}_{\pi_o}(\varphi)$
        \Comment{Update optimistic policy parameters}
        \State $\phi \leftarrow \phi - \alpha_{\pi_p} \nabla_\phi \mathcal{L}_{\pi_p}(\phi)$
        \Comment{Update pessimistic policy parameters}
        \State $\alpha \leftarrow \alpha - \alpha_\alpha \nabla_\alpha \mathcal{L}_{\alpha}(\alpha)$
        \Comment{Update entropy temperature}
        \State $\theta'_i \leftarrow \tau \theta_i + (1 - \tau) \theta'_i, \quad \text{for } i=1,2$
        \Comment{Update target critic parameters}
        
    \EndFor
\EndFor
\end{algorithmic}
\textbf{Output:} Optimized parameters $\theta_1$, $\theta_2$, $\phi$, $\varphi$, $\alpha$
\end{algorithm*}
 
\noindent{\textbf{Pessimistic and Optimistic Policies. }}We first define a \emph{pessimistic actor}, parameterized by $\phi$, which relies on the lower bound of the Q-function estimates to ensure robust performance. Its objective is formulated as:
\begin{equation}
\mathcal{L}_{\pi_p}(\phi) = \mathbb{E}_{s \sim \mathcal{D},\, a_t \sim \pi_p} \left[ \alpha \log \pi_p(a_t|s_t) - \min_{i=1,2} Q_{\theta_i}(s_t, a_t) \right]
\end{equation}
This actor outputs a pessimistic policy denoted by $\pi_{p}\bigl(\cdot \mid s_{t}\bigr)$. Additionally, WBSAC incorporates an \emph{optimistic actor}, parameterized by $\varphi$, which explores actions with potentially high long-term rewards using discrepancies between the Q-value predictions of two critic networks. High divergence in critic estimates often indicates regions of the action space with greater uncertainty, where exploration may uncover high-reward actions. This optimistic actor targets these regions by maximizing an objective that balances the average Q-value with a standard deviation term:
\begin{equation}
\mathcal{L}_{\pi_o}(\varphi) = \mathbb{E}_{s \sim \mathcal{D},\, a_t \sim \pi_o(\cdot | s_t; \varphi)} \left[ -\left( \mu_Q + \beta_o \cdot \sigma_Q \right) \right]
\end{equation}

where $\mu_Q = \frac{Q_{\theta_1}(s_t, a_t) + Q_{\theta_2}(s_t, a_t)}{2} \quad$ and $\sigma_Q = \sqrt{\frac{1}{2} \sum_{i=1}^2 (Q_i - \mu_Q)^2}$ and $\beta_o$ is a positive hyperparameter, which controls the uncertainty bonus. This actor outputs an optimistic policy specified by $\pi_{o}\bigl(\cdot \mid s_{t}\bigr)$. 

% $\quad \sigma_Q = \frac{ \left| Q_{\theta_1}(s_t, a_t) - Q_{\theta_2}(s_t, a_t) \right| }{2}$, 

\noindent{\textbf{Wasserstein Barycentric Exploration Policy. }}Since SAC employs stochastic policies by outputting probability distributions over the action space, we propose integrating the pessimistic and optimistic policies through the Wasserstein barycenter. Specifically, we define the \emph{exploration policy} $\pi_e$ as the distribution that minimizes the weighted sum of Wasserstein distances to $\pi_p$ and $\pi_o$:
\begin{equation}
\pi_e = \arg \min_{\pi} \left( \xi_p \, W_2(\pi, \pi_p)^2 + \xi_o \, W_2(\pi, \pi_o)^2 \right)
\label{explorationpolicy}
\end{equation}
where $\xi_p$ and $\xi_o$ are non-negative respective weights of pessimistic and optimistic policies, which satisfy $\xi_p + \xi_o = 1$. By tuning  $\xi_p$ and $\xi_o$, we ensure a controlled balance between exploitation and exploration. 

% This formulation provides a gradual shift from pessimistic to optimistic behavior. Hence, 

\noindent{\textbf{Exploration Degree Adjustment. }}To ensure stable learning, WBSAC initially biases $\pi_e$ towards the pessimistic policy $\pi_p$. This is achieved by scheduling the weight $\xi_o$ to start at zero and grow linearly as training progresses. The weight scheduling strategy is considered $\xi_o = \min\left(1,\, \lambda \cdot \frac{t}{T}\right)$, where $\lambda$ is the exploration schedule parameter, $t$ denotes the current environment timestep, and $T$ represents the total environment time steps.

\noindent{\textbf{Closed-Form Barycenter for Factorized Gaussian Policies. }}For Gaussian policies, the barycenter has a closed-form solution, which decreases computation cost while preserving theoretical validity. Assuming both the pessimistic policy $\pi_p(\cdot|s) = \mathcal{N}(\mu_p(s), \Sigma_p(s))$ and the optimistic policy $\pi_o(\cdot|s) = \mathcal{N}(\mu_o(s), \Sigma_o(s))$) are Gaussian, their $W_2$-barycenter $\pi_e(\cdot|s) = \mathcal{N}(\mu_e(s), \Sigma_e(s))$ is also Gaussian. Its mean $\mu_e(s)$ and covariance $\Sigma_e(s)$ are efficiently computed in closed form:
\begin{align}
\mu_e(s) &= \xi_p \mu_p(s) + \xi_o \mu_o(s) \label{eq:barycenter_mean_wbsac} \\
\Sigma_e(s) &= \left(\xi_p \Sigma_p(s)^{1/2} + \xi_o \Sigma_o(s)^{1/2}\right)^2 \label{eq:barycenter_cov_wbsac}
\end{align}
where $\Sigma^{1/2}$ is the principal matrix square root of a positive semi-definite covariance matrix $\Sigma$. Figure ~\ref{fig:wrap-exploration} illustrates how the exploration policy gradually transitions toward the optimistic policy as a function of the evolving Wasserstein barycenter in one-dimensional action space in a fixed state.

Algorithm~\ref{alg:WassersteinAC} summarizes the overall WBSAC procedure. During environment interaction, actions are sampled from the exploration policy $\pi_e$, which is dynamically formed as the Wasserstein barycenter of the pessimistic and optimistic policies using scheduled weights. In the training phase, the critic networks, both actor networks, and the SAC temperature parameter $\alpha$ are updated using transitions sampled from the replay buffer.

\begin{wrapfigure}{r}{0.45\textwidth}  % 'r' for right; adjust width as needed
  \centering
  \vspace{-1em}  % Optional: shift up if needed
  \begin{minipage}{0.35\textwidth}
    \centering
    \includegraphics[width=0.95\linewidth]{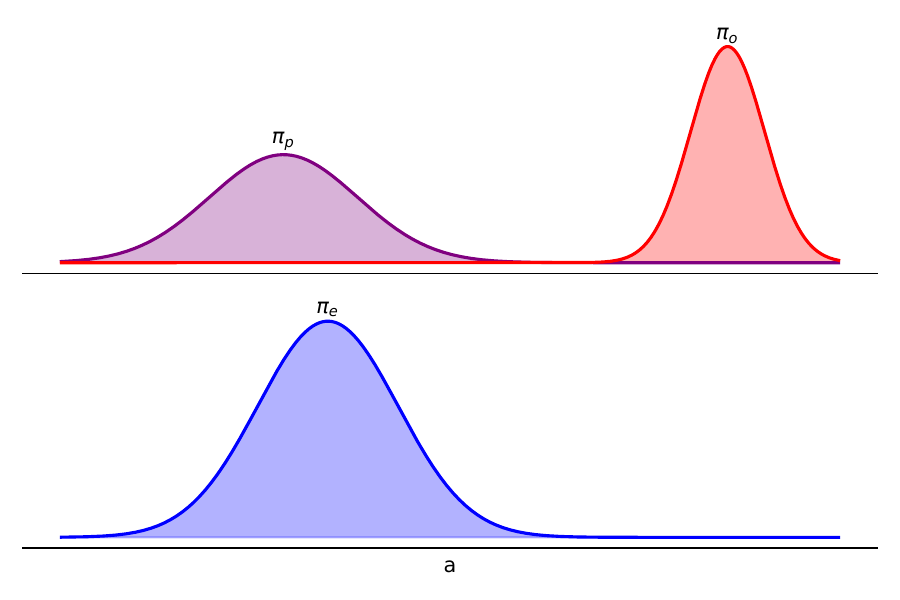}
    \caption*{\scriptsize $\xi_o = 0.1$}
    \vspace{0.5em}
    \includegraphics[width=0.95\linewidth]{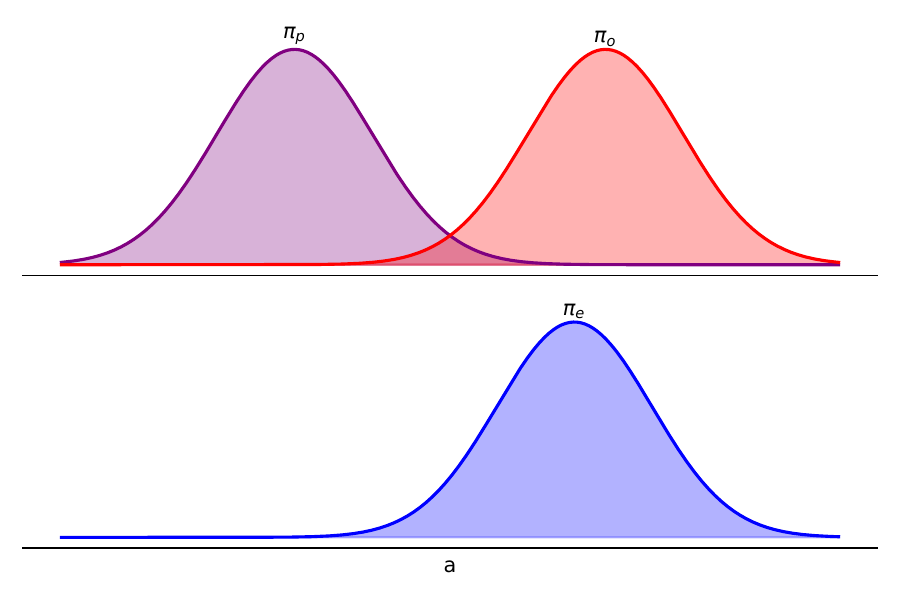}
    \caption*{\scriptsize $\xi_o = 0.9$}
  \end{minipage}
  \caption{\scriptsize Exploration policy evolution. Initially, the exploration policy $\pi_e$ aligns with the pessimistic policy $\pi_p$. As learning proceeds, the exploration policy shifts toward the optimistic policy $\pi_o$.}
  \label{fig:wrap-exploration}
\end{wrapfigure}

\textbf{Remark:} Encouraging the agent to explore the parts of the action space that the two critics assign high value or disagree on their value after the exploitation phase, avoids the problem of unidirectional exploration in SAC, which was introduced by \cite{ciosek2019better} first. In other words, the inclusion of the standard deviation term (critic disagreement) in the optimistic actor's objective explicitly encourages exploration towards regions of higher epistemic uncertainty, rather than uniformly sampling the upper bound of Q-value estimates.

\textbf{Remark:} By linking the exploration policy to the pessimistic policy at the beginning of the learning process, we prioritize exploitation and ensure that the transitions sampled from the exploration policy align
with the probabilities expected under the pessimistic policy. As the training procedure proceeds, we encourage the agent to a small degree of exploration to escape the local optimum associated with lower-bound estimations (if any). The gradual interpolation from purely pessimistic to increasingly optimistic alleviates the phenomenon known as \emph{pessimistic underexploration}, first discussed in \cite{ciosek2019better}, in which an agent constrained by overly cautious value estimates can become trapped in suboptimal regions of the state–action space. WBSAC’s weighted‐barycenter mechanism guides the policy beyond such a local optimum.

\textbf{Proposition 1. }\textit{For factorized Gaussian pessimistic and optimistic policies, the exploration policy $\pi_e$ (derived from Eq.~\ref{eq:barycenter_mean_wbsac} and \ref{eq:barycenter_cov_wbsac}) has its differential entropy, $H(\pi_e(s))$, lower-bounded for any state $s \in \mathcal{S}$ as:}
\begin{equation}
\label{eq:entropy_inequality}
H\!\bigl(\pi_e(s)\bigr)\;\;\ge\;\;
\xi_p\,H\!\bigl(\pi_p(s)\bigr)+\xi_o\,H\!\bigl(\pi_o(s)\bigr).
\end{equation}
The proof is deferred to the Appendix. B.

\textbf{Remark:} Proposition 1 provides a theoretical basis for the exploration capability of $\pi_e$, which is essential for achieving broad coverage and collecting diverse data. As the weight $\xi_o$ for the optimistic policy increases, its contribution to this lower bound ensures $\pi_e$ maintains a capacity for broad action selection. This stochasticity promotes the generation of diverse data, as the agent samples a wider variety of actions, and leads to enhancing coverage of the state-action space.

\begin{table}[t!]
\centering
\caption{Average return over the last $10$ evaluations over $5$ trials across MuJoCo environments.}
\label{tab:results}
\resizebox{\columnwidth}{!}{%
\begin{tabular}{lcccc}
\toprule
\textbf{Task} & \textbf{WBSAC (ours)} & \textbf{SAC} & \textbf{DARC} & \textbf{OAC} \\
\midrule
Ant-v5         & $5564.8 \pm 188.5$   & $5306.5 \pm 418.3$   & $3926.2 \pm 904.2$   & $\mathbf{5867.0} \pm 292.3$ \\
HalfCheetah-v5 & $\mathbf{6466.4} \pm 1411.0$ & $5409.7 \pm 2219.1$ & $4234.1 \pm 1768.3$ & $5210.5 \pm 2353.8$ \\
Walker2d-v5    & $\mathbf{4417.0} \pm 703.0$  & $3939.1 \pm 262.0$  & $3760.8 \pm 485.3$  & $4112.9 \pm 217.4$ \\
Humanoid-v5    & $\mathbf{5701.2} \pm 463.4$  & $5086.5 \pm 170.0$  & $4545.5 \pm 780.8$  & $4850.1 \pm 279.1$ \\
\bottomrule
\end{tabular}}
\end{table}

\section{Experiments}
\label{sec:Evaluation}

We evaluate WBSAC performance on MuJoCo continuous control environments \cite{todorov2012mujoco}, DeepMind control suite tasks \cite{tassa2018deepmind}, and the PointMaze navigation task \cite{de2023gymnasium}. We additionally provide a sensitivity analysis of hyperparameters. In addition to vanilla SAC, our comparative analysis includes DARC framework \cite{lyu2022efficient}, which incorporates two actor networks to mitigate the overestimation bias by taking the minimum of value estimates from two actors and the underestimation bias by taking the maximum of the minimum value estimates from double critics.  Our third baseline is OAC \cite{ciosek2019better}, which is on top of SAC and focuses on directed exploration to enhance sample efficiency.
% DARC introduces critic regularization to reduce the uncertainty in value estimates from double critics by a convex combination of value estimates from the two actors. Furthermore, it employs a cross-update scheme where only one actor-critic pair is updated per timestep, with the other pair used for value correction. This delayed update promotes policy smoothing, similar to TD3 algorithm.

\noindent\textbf{Performance on MuJoCo benchmarks.} We evaluate WBSAC performance on four MuJoCo continuous control tasks \cite{brockman2016openai} via OpenAI Gym \cite{brockman2016openai}, and provide averaged results over five seeds. In MuJoCo benchmark environments, the maximum number of interaction steps is $1000$ per episode. We evaluate the performance of the pessimistic policy every $5,000$ step. We use the moving average window of $40$ iterations.

\begin{table}[!b]
\centering
\caption{Average return over the last $10$ evaluations over $5$ trials across DeepMind control suite tasks.}
\label{tab:dmc_results}
\resizebox{\columnwidth}{!}{%
\begin{tabular}{lcccc}
\toprule
\textbf{Domain-Task} & \textbf{WBSAC (ours)} & \textbf{DARC} & \textbf{SAC} & \textbf{OAC} \\
\midrule
Finger-Turn Easy    & \textbf{943.95 $\pm$ 17.55} & 937.73 $\pm$ 27.25 & 892.81 $\pm$ 85.34 & 937.47 $\pm$ 15.37 \\
Finger-Turn Hard    & \textbf{910.96 $\pm$ 60.29} & 784.17 $\pm$ 138.09 & 856.08 $\pm$ 90.36 & 891.46 $\pm$ 39.44 \\
Ball-in-cup Catch   & \textbf{983.93 $\pm$ 2.29}  & 980.12 $\pm$ 2.67  & 982.60 $\pm$ 2.00  & \textbf{983.90 $\pm$ 2.09} \\
Hopper Hop    & \textbf{130.14 $\pm$ 116.82} & 120.33 $\pm$ 75.24    & 116.33 $\pm$ 50.66  & 63.00 $\pm$ 49.23 \\
Cheetah Run         & \textbf{740.16 $\pm$ 41.33} & 685.53 $\pm$ 75.57 & 647.55 $\pm$ 26.77 & 672.05 $\pm$ 38.82 \\
Walker Walk         & \textbf{965.34 $\pm$ 6.31}  & 852.31 $\pm$ 108.17 & 852.31 $\pm$ 115.41 & 965.80 $\pm$ 2.85 \\
Walker Run          & \textbf{684.25 $\pm$ 18.10} & 527.50 $\pm$ 77.67 & 577.86 $\pm$ 59.49 & 671.08 $\pm$ 41.42 \\
Humanoid Run        & \textbf{144.06 $\pm$ 11.55} & 1.23 $\pm$ 0.21    & 22.88 $\pm$ 36.82  & 98.77 $\pm$ 50.70 \\
\bottomrule
\end{tabular}
}
\end{table}

Figure~\ref{fig:return} presents the average total episode reward and its standard deviation for five seeds. We observe that using two actors enhances the overall performance. As can be seen, our algorithm outperforms SAC in all four case studies. We emphasize that no hyperparameter tuning was performed on the Humanoid task. Table~\ref{tab:results} provides the average return and standard deviation of the last ten evaluations. As observed, in all case studies except for Ant-v5, WBSAC achieves superior or comparable performance to OAC. 

\begin{figure*}[!t]
  \centering
  \begin{subfigure}{0.24\textwidth}
    \includegraphics[width=\linewidth]{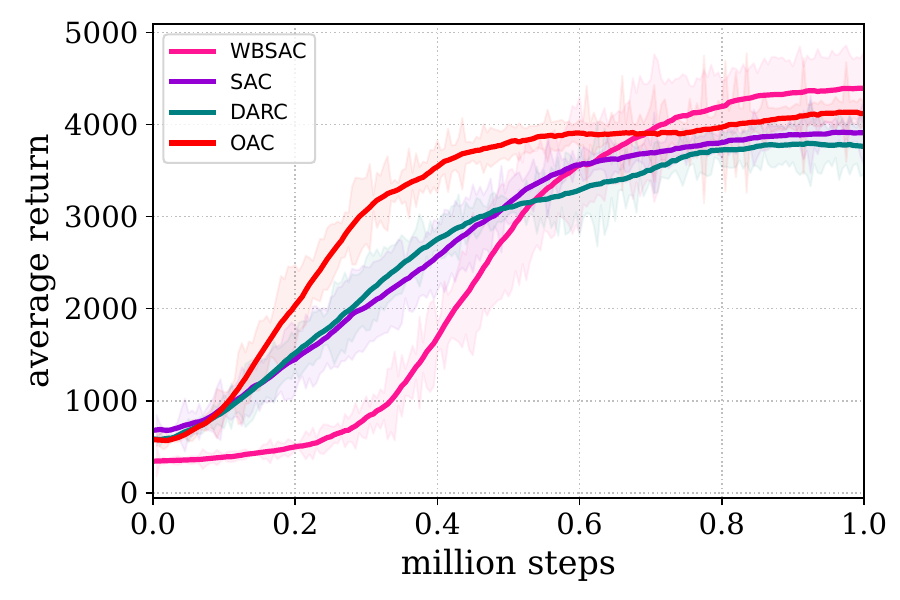}
    \caption{Walker2d}
    \label{fig:subfig-1}
  \end{subfigure}
  \hfill
  \begin{subfigure}{0.24\textwidth}
    \includegraphics[width=\linewidth]{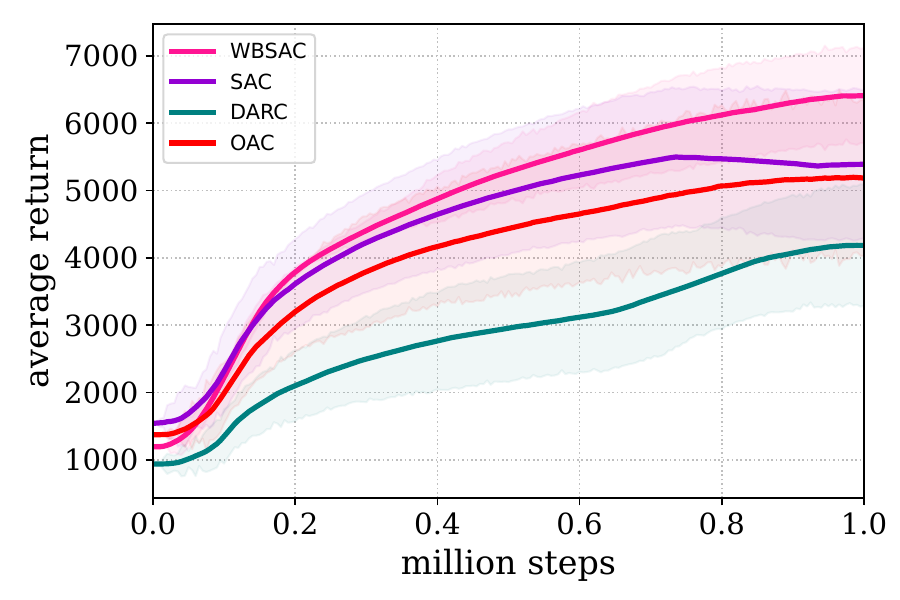}
    \caption{HalfCheetah}
    \label{fig:subfig-2}
  \end{subfigure}
  \hfill
  \begin{subfigure}{0.24\textwidth}
    \includegraphics[width=\linewidth]{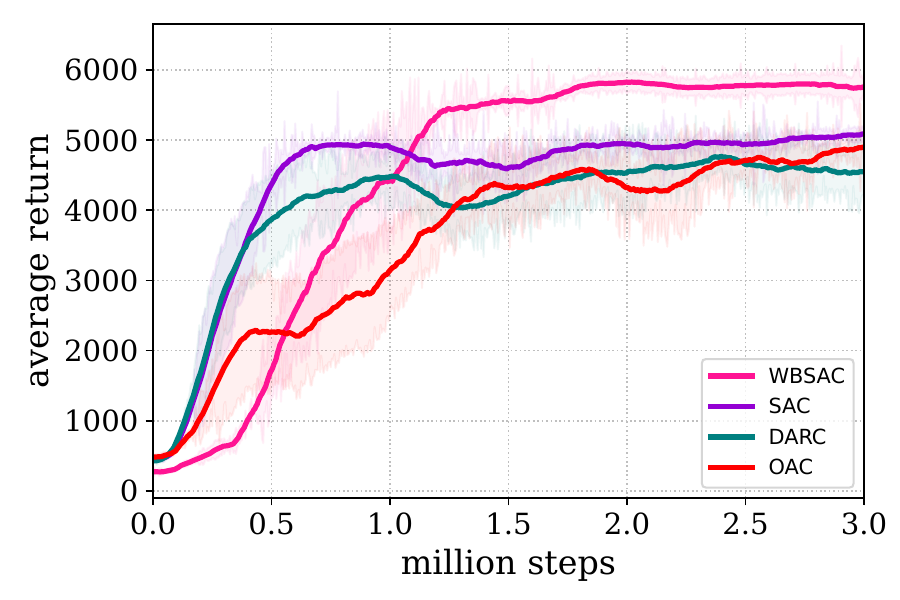}
    \caption{Humanoid}
    \label{fig:subfig-3}
  \end{subfigure}
  \hfill
  \begin{subfigure}{0.24\textwidth}
    \includegraphics[width=\linewidth]{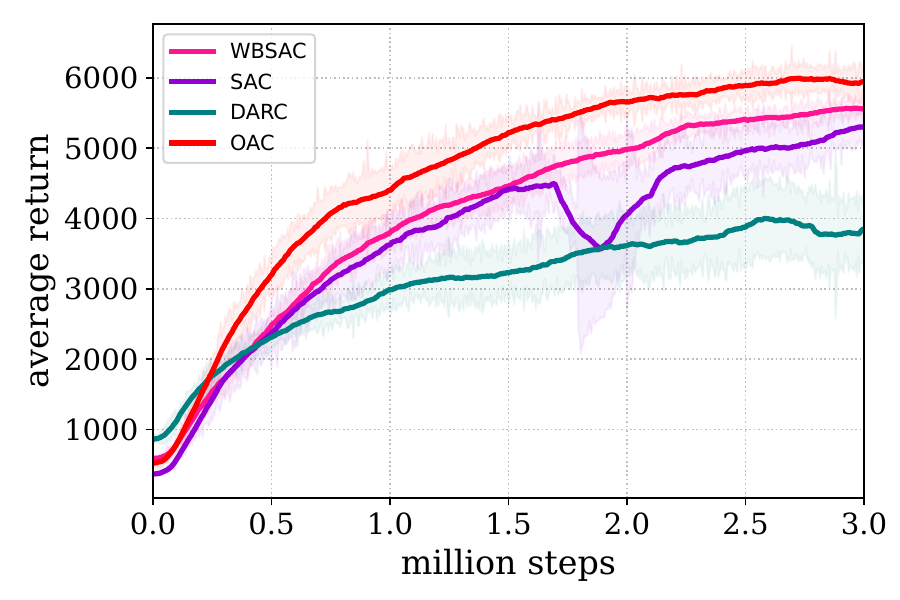}
    \caption{Ant}
    \label{fig:subfig-5}
  \end{subfigure}
  \caption{Performance comparison on MuJoCo environemnts. WBSAC outperforms SAC and DARC in all tasks and OAC in three.}
  \label{fig:return}
\end{figure*}

\begin{figure*}[!t]
  \centering
  % First row
  \begin{subfigure}{0.24\textwidth}
    \includegraphics[width=\linewidth]{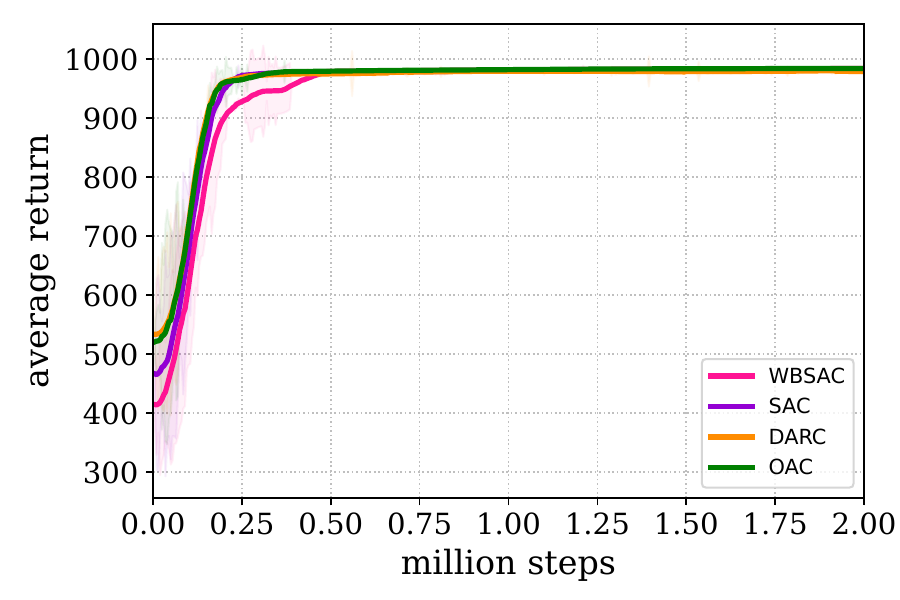}
    \caption{Ball in cup catch}
    \label{fig:subfig1}
  \end{subfigure}
  \hfill
  \begin{subfigure}{0.24\textwidth}
    \includegraphics[width=\linewidth]{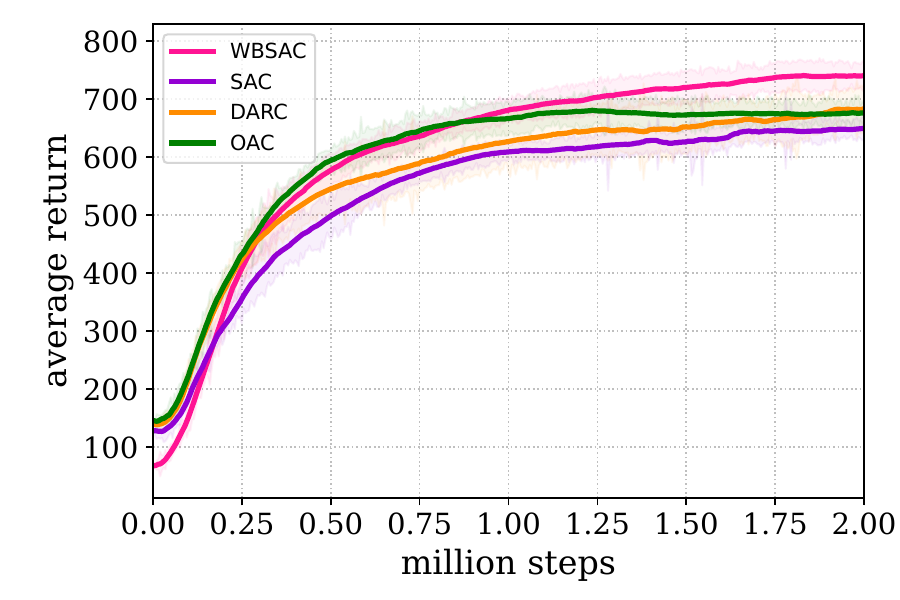}
    \caption{Cheetah-Run}
    \label{fig:subfig2}
  \end{subfigure}
  \hfill
  \begin{subfigure}{0.24\textwidth}
    \includegraphics[width=\linewidth]{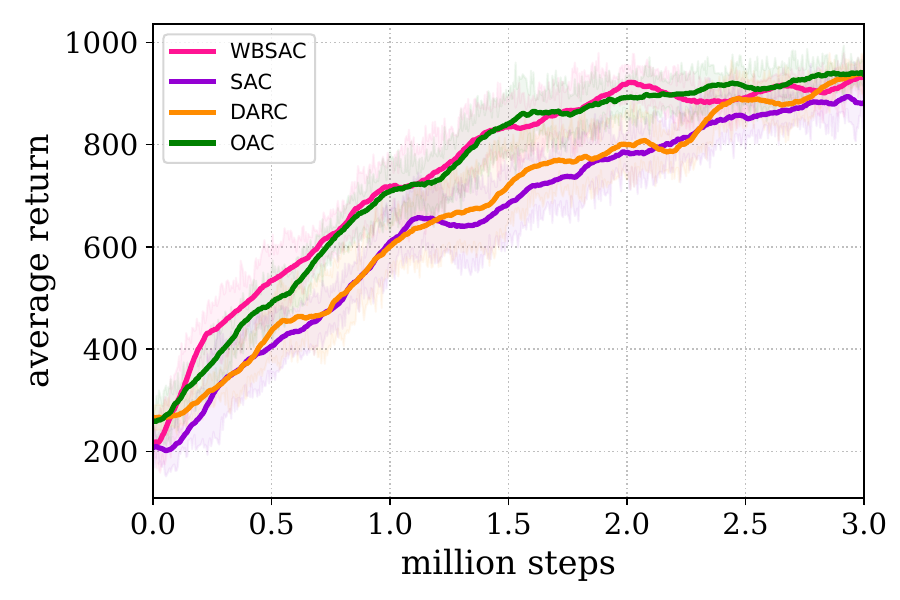}
    \caption{Finger turn-easy}
    \label{fig:subfig3}
  \end{subfigure}
  \hfill
  \begin{subfigure}{0.24\textwidth}
    \includegraphics[width=\linewidth]{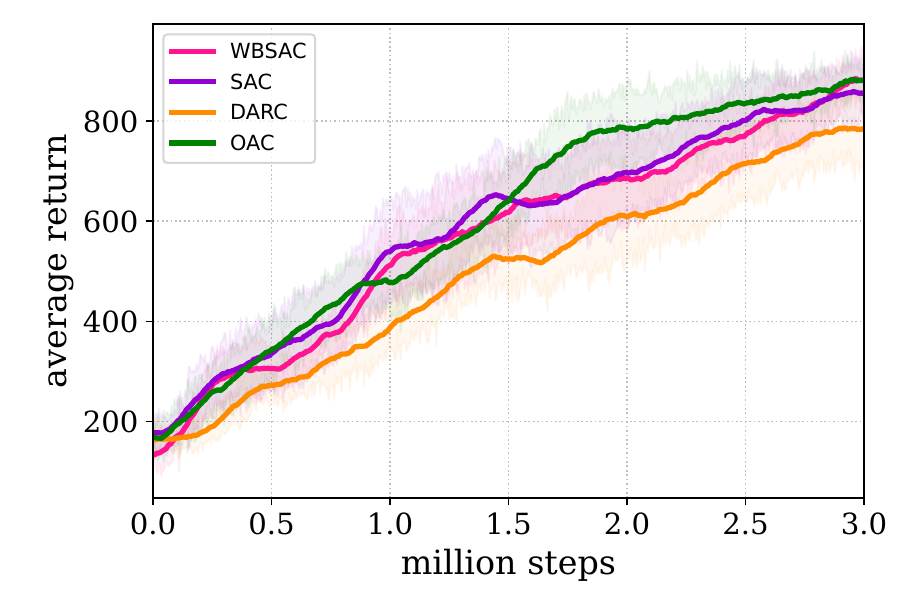}
    \caption{Finger turn-hard}
    \label{fig:subfig4}
  \end{subfigure}

  \vspace{0.4cm}

  % Second row
  \begin{subfigure}{0.24\textwidth}
    \includegraphics[width=\linewidth]{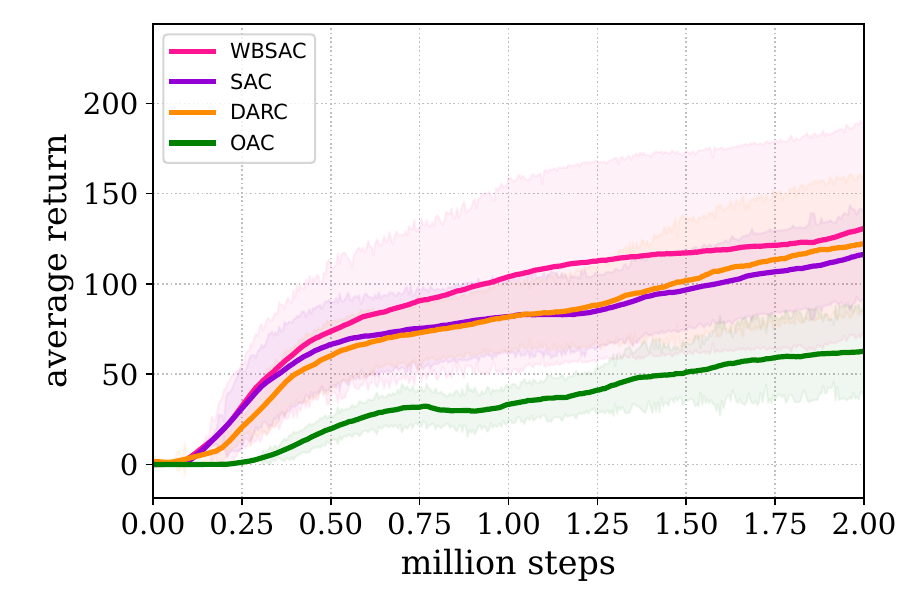}
    \caption{Hopper Hop}
    \label{fig:subfig5}
  \end{subfigure}
  \hfill
  \begin{subfigure}{0.24\textwidth}
    \includegraphics[width=\linewidth]{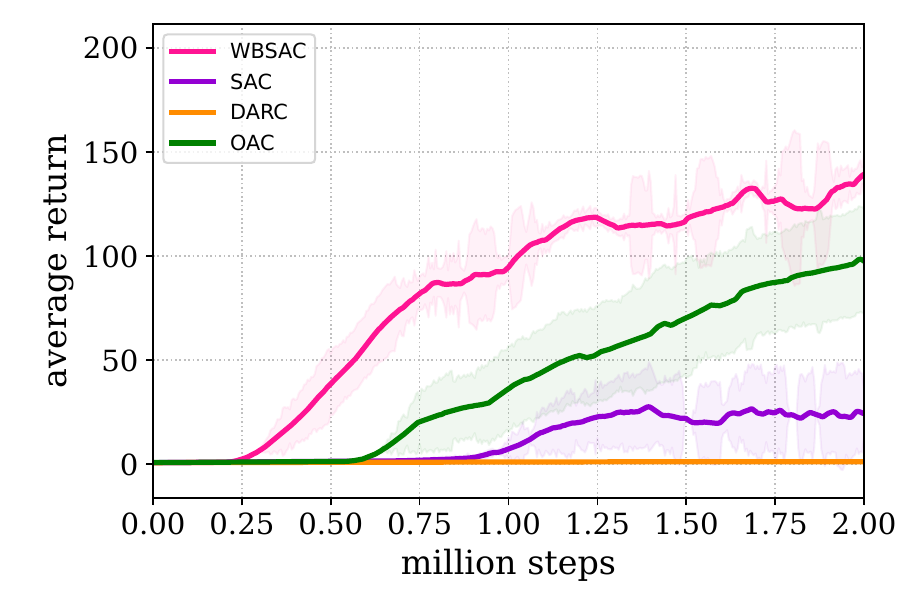}
    \caption{Humanoid-Run}
    \label{fig:subfig6}
  \end{subfigure}
  \hfill
  \begin{subfigure}{0.24\textwidth}
    \includegraphics[width=\linewidth]{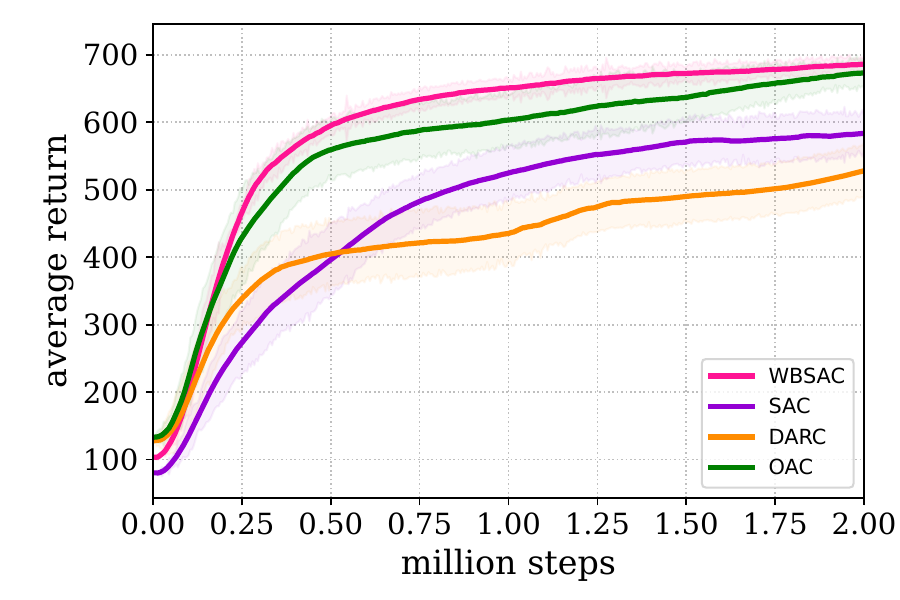}
    \caption{Walker-Run}
    \label{fig:subfig7}
  \end{subfigure}
  \hfill
  \begin{subfigure}{0.24\textwidth}
    \includegraphics[width=\linewidth]{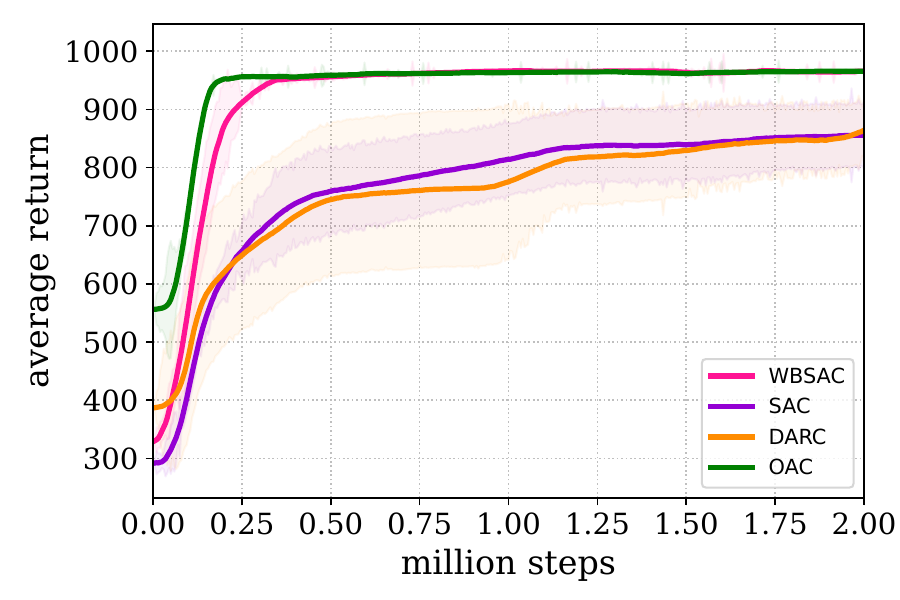}
    \caption{Walker-Walk}
    \label{fig:subfig8}
  \end{subfigure}

  \caption{Performance comparison on DeepMind control suite tasks. WBSAC consistently outperforms SAC and DARC, while it shows better or comparable performance with respect to OAC.}
  \label{fig:dmc}
\end{figure*}

\noindent\textbf{Performance on DeepMind control suite tasks. }We further assess WBSAC performance on seven challenging tasks from DeepMind control suite \cite{tassa2018deepmind} with both dense and sparse reward settings, which are known to be difficult for many off-policy, model-free reinforcement learning algorithms.

Figure~\ref{fig:dmc} depicts the average return and its standard deviation for five seeds. The results demonstrate that WBSAC outperforms the baselines and successfully solves the challenging humanoid-run task on which other methods fail. Table~\ref{tab:dmc_results} provides the average of last ten evaluation of the optimized policies.

\begin{figure}[!h]
    \centering
    \includegraphics[width=0.5\columnwidth]{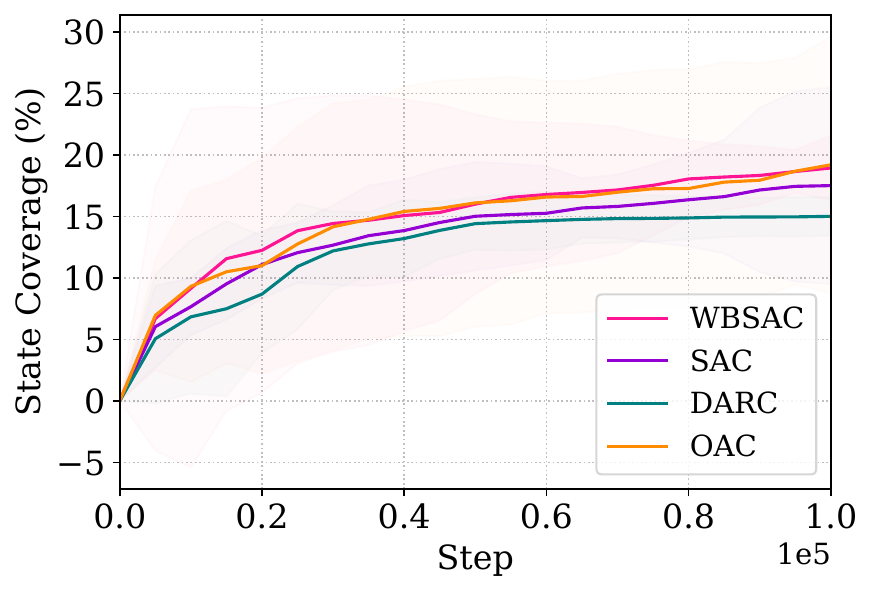}
    \caption{Average coverage over 3 seeds in the PointMaze Medium-v3 navigation task with sparse reward.}
    \label{fig:coverage}
\end{figure}

\noindent\textbf{Exploration capability. }To assess the exploration capabilities of WBSAC, we conducted tests in a navigation maze, which is a modified version of the PointMaze Medium-v3 environment \cite{de2023gymnasium}. In this PointMaze Medium-v3 setup, the agent's objective is to guide a ball to an unknown goal location within the maze. The ball starts in the maze's center, and we've designated two possible goal locations: the top-right and bottom-left corners. A sparse $0-1$ reward is applied upon reaching the goal. We train the agent for $100\text{k}$ steps, with its policy being evaluated every $5000$ steps. We present the average return and average coverage across $3$ different random seeds.

\begin{figure*}[!t]
  \centering
  % First row
  \begin{subfigure}{0.23\textwidth}
    \includegraphics[width=\linewidth]{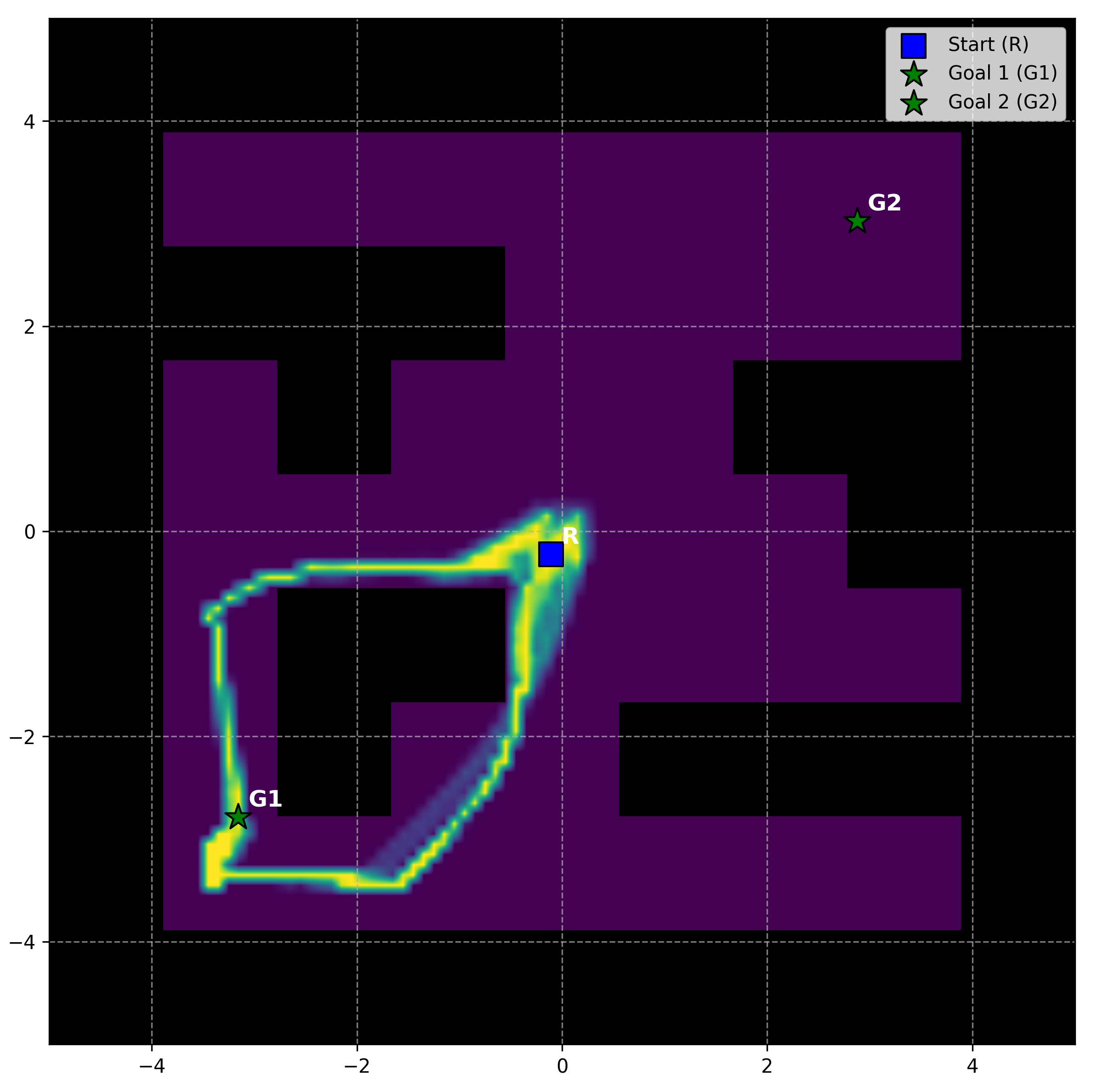}
    \caption{WBSAC}
    \label{fig:img1}
  \end{subfigure}\hfill
  \begin{subfigure}{0.23\textwidth}
    \includegraphics[width=\linewidth]{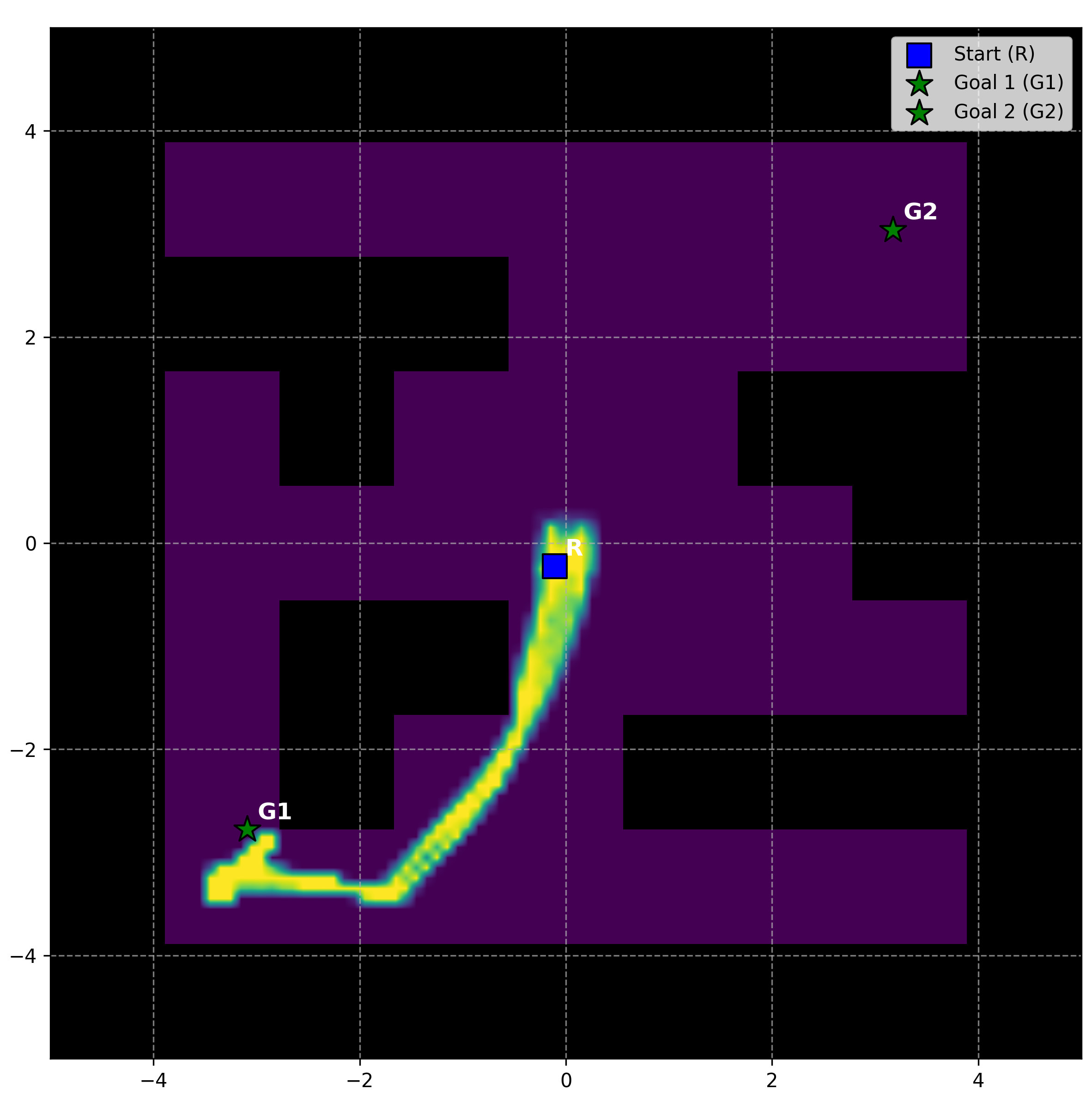} % WBSAC repeated
    \caption{SAC}
    \label{fig:img1b}
  \end{subfigure}\hfill
  \begin{subfigure}{0.23\textwidth}
    \includegraphics[width=\linewidth]{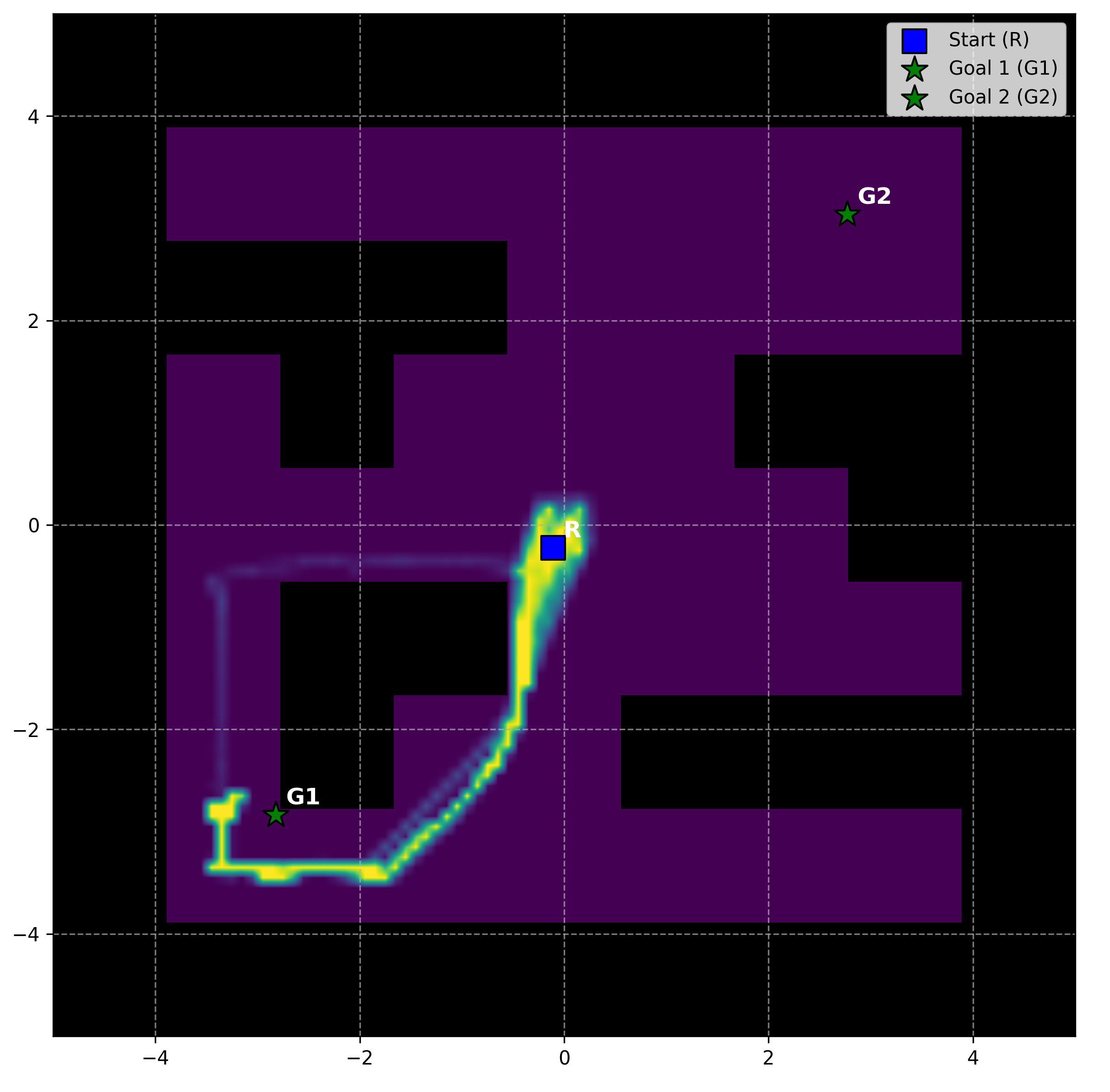}
    \caption{DARC}
    \label{fig:img2}
  \end{subfigure}\hfill
  \begin{subfigure}{0.23\textwidth}
    \includegraphics[width=\linewidth]{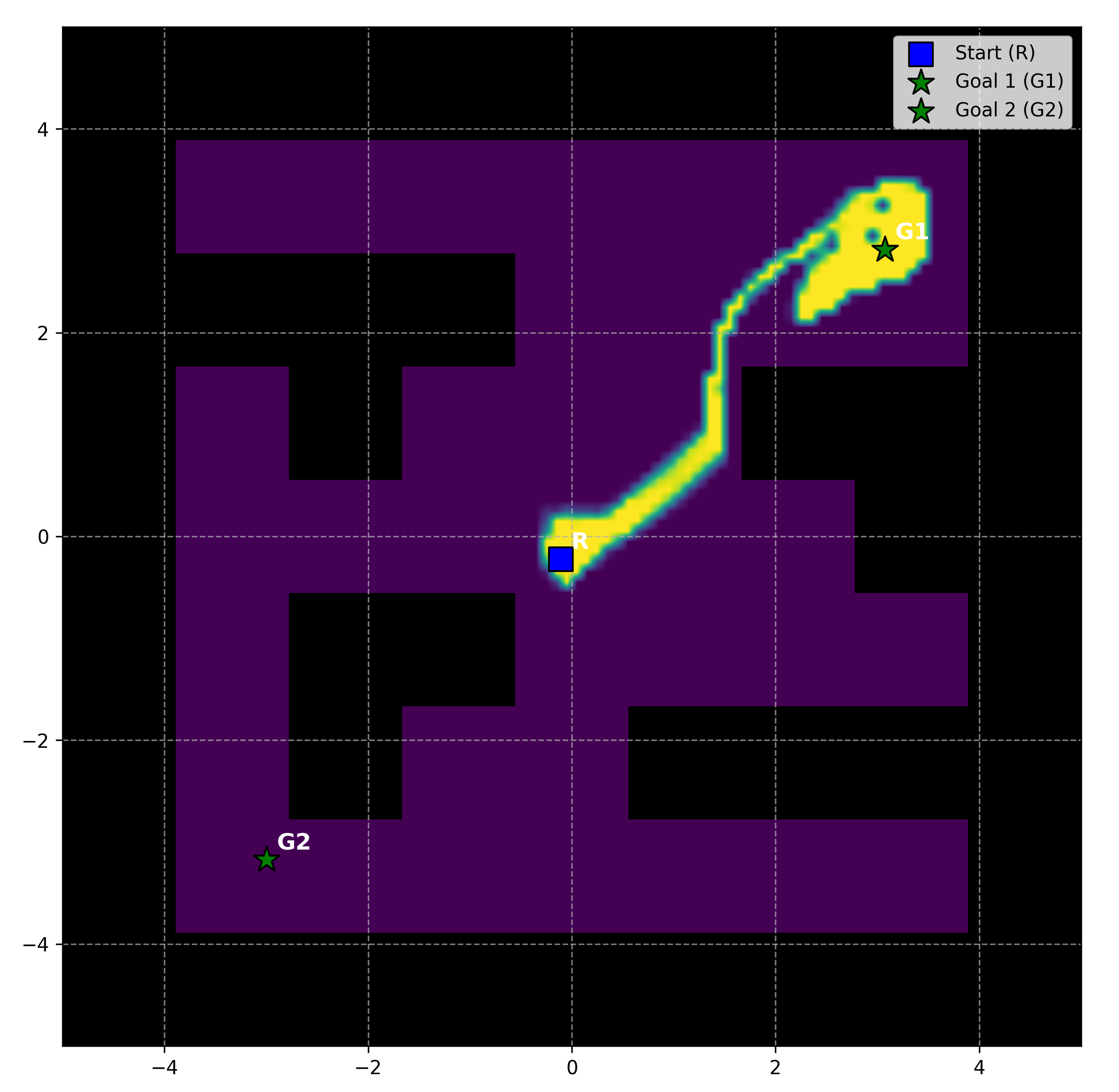}
    \caption{OAC}
    \label{fig:img3}
  \end{subfigure}

  \vspace{1em}

  % Second row
  \begin{subfigure}{0.23\textwidth}
    \includegraphics[width=\linewidth]{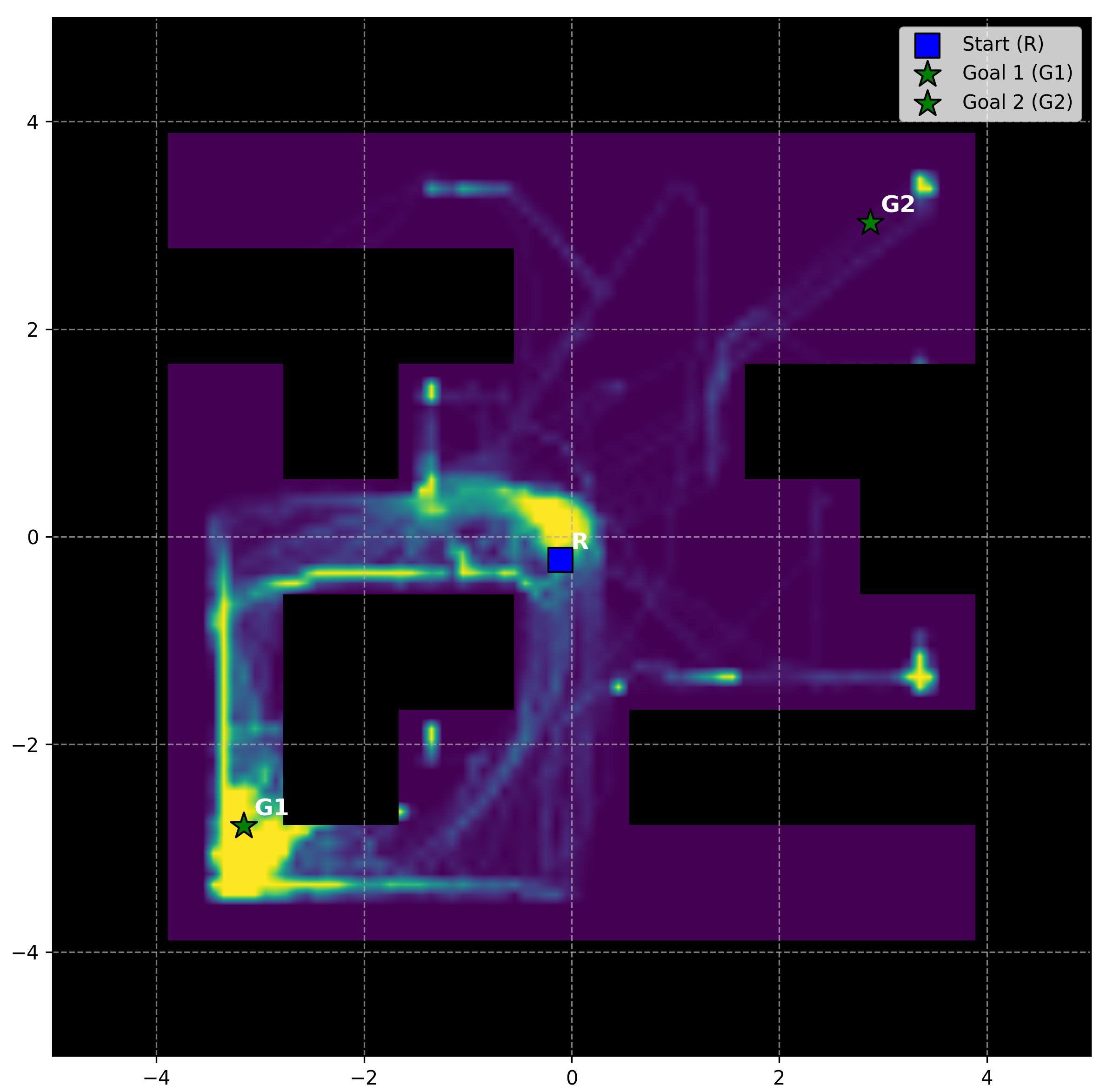}
    \caption{WBSAC}
    \label{fig:img4}
  \end{subfigure}\hfill
  \begin{subfigure}{0.23\textwidth}
    \includegraphics[width=\linewidth]{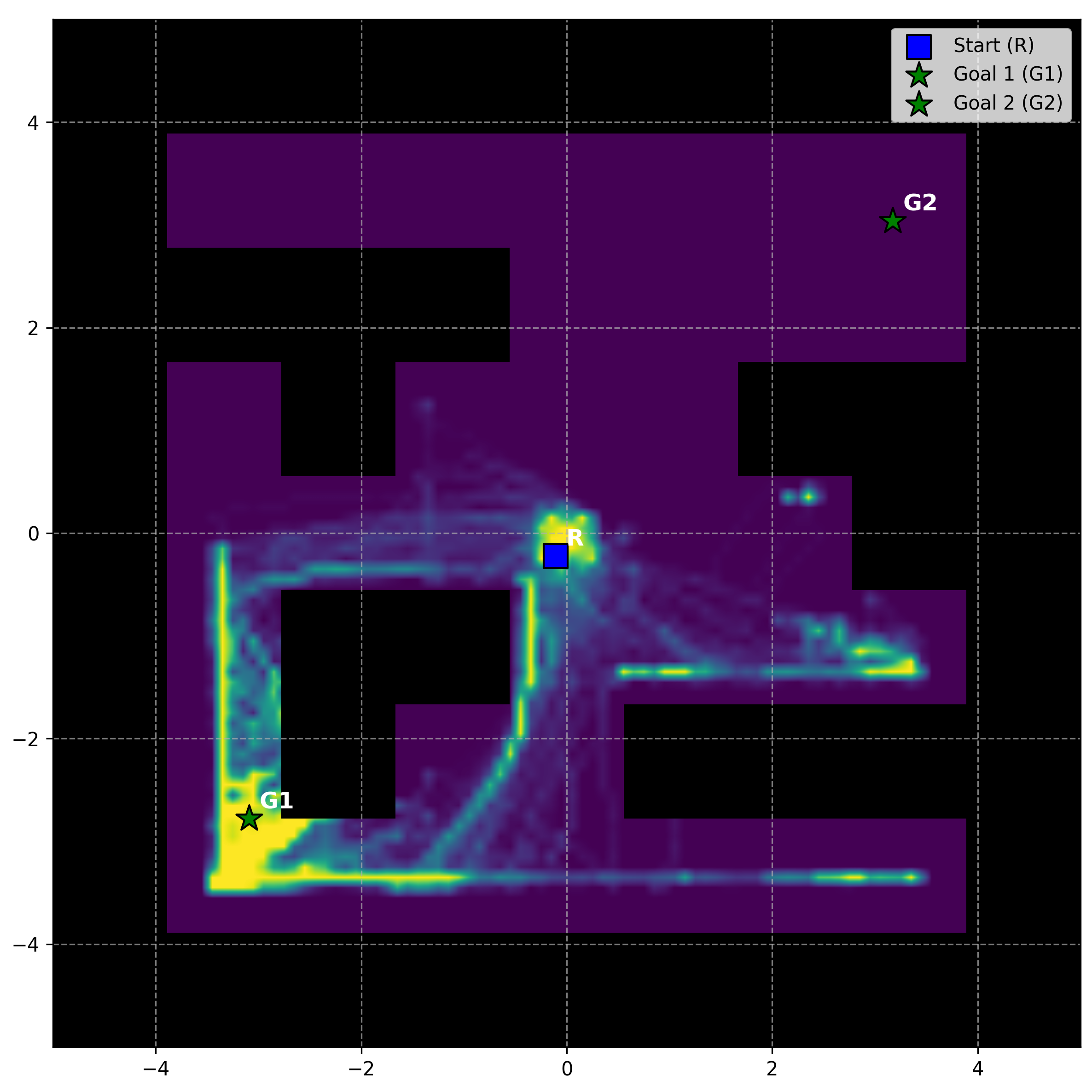} % WBSAC repeated
    \caption{SAC}
    \label{fig:img4b}
  \end{subfigure}\hfill
  \begin{subfigure}{0.23\textwidth}
    \includegraphics[width=\linewidth]{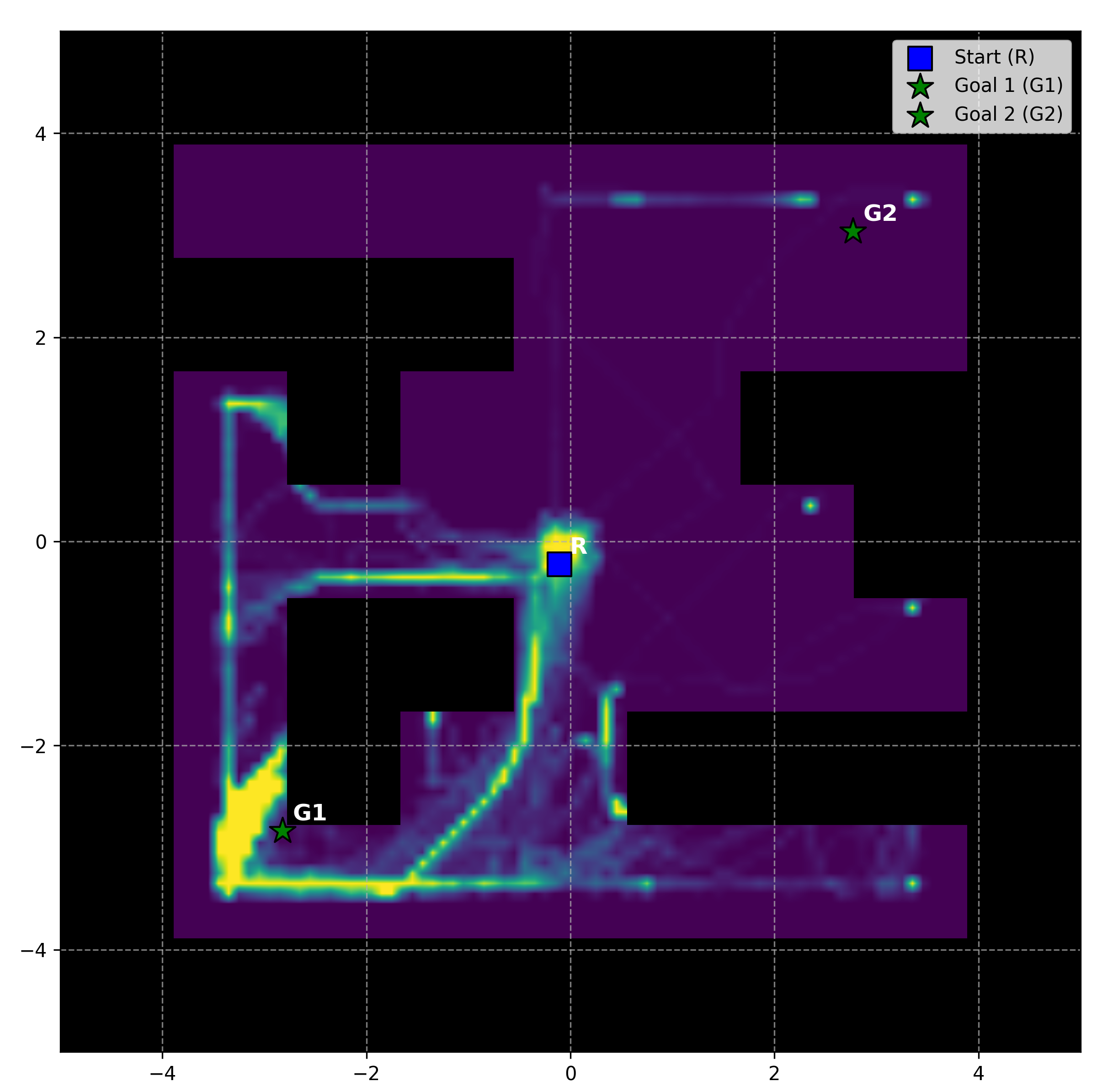}
    \caption{DARC}
    \label{fig:img5}
  \end{subfigure}\hfill
  \begin{subfigure}{0.23\textwidth}
    \includegraphics[width=\linewidth]{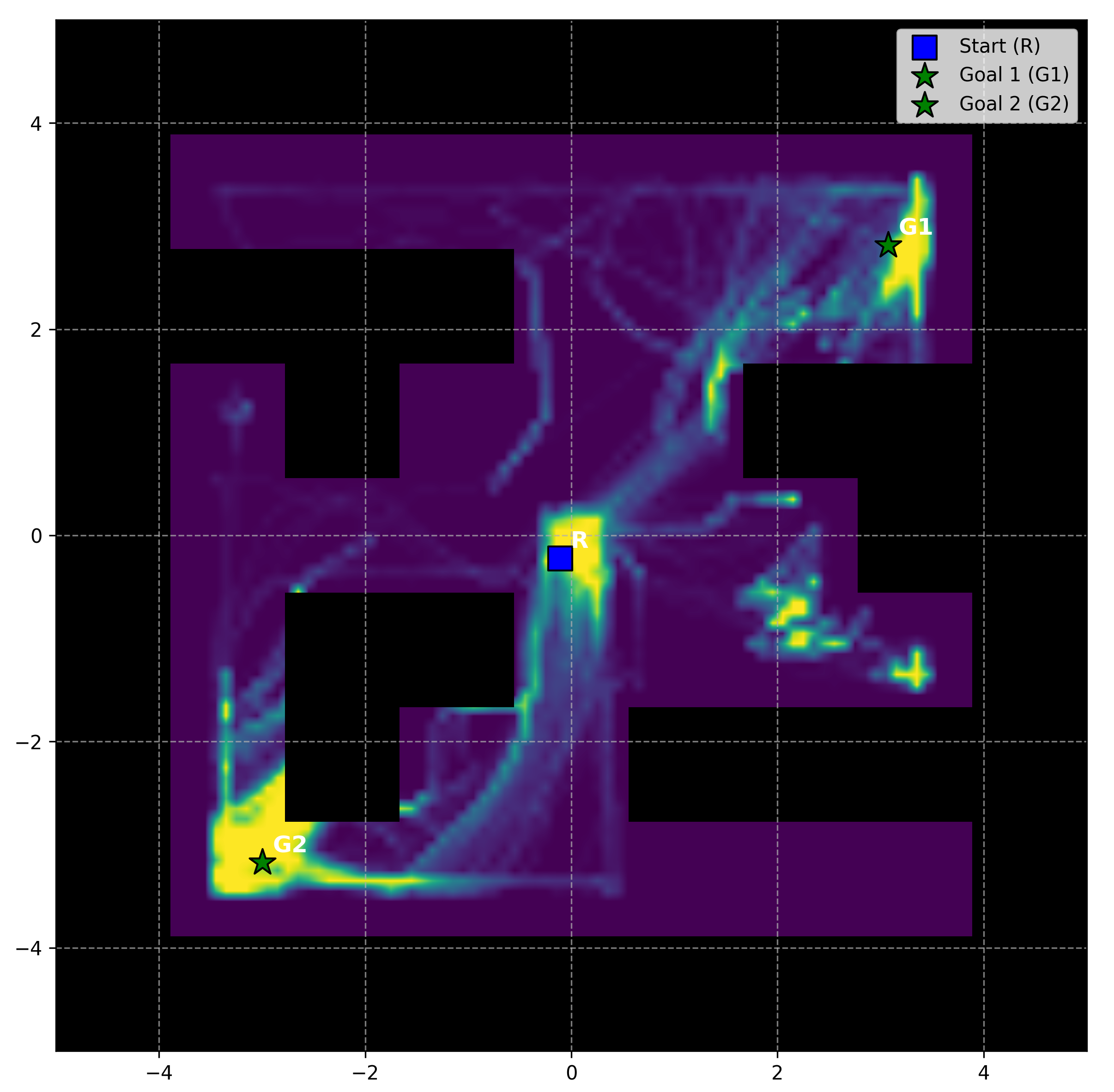}
    \caption{OAC}
    \label{fig:img6}
  \end{subfigure}

  \caption{State visitation heatmaps for pessimistic (first row) and exploration policies (second row).}
  \label{fig:eight-panel}
\end{figure*}

Figure~\ref{fig:coverage} indicates WBSAC achieves a higher coverage percentage than other baselines. Figure~\ref{fig:eight-panel} presents exploration heatmaps for the PointMaze task from one representative seed to visually evaluate exploration efficacy in this sparse reward setting. As shown in Figure~\ref{fig:eight-panel}, WBSAC exploration policy achieves more comprehensive state-space coverage compared to baselines. In comparison to baselines, WBSAC wastes less resources on visiting non-rewarding regions of the maze, and focus its exploration on promising areas, which is because of directed exploration. The heatmap of our pessimistic policy shows that WBSAC successfully identifies multiple distinct paths to the goal, which indicates a more robust understanding of the task.

\begin{figure}[!h]
    \centering
    \begin{subfigure}{0.49\columnwidth}
        \centering
        \includegraphics[width=\linewidth]{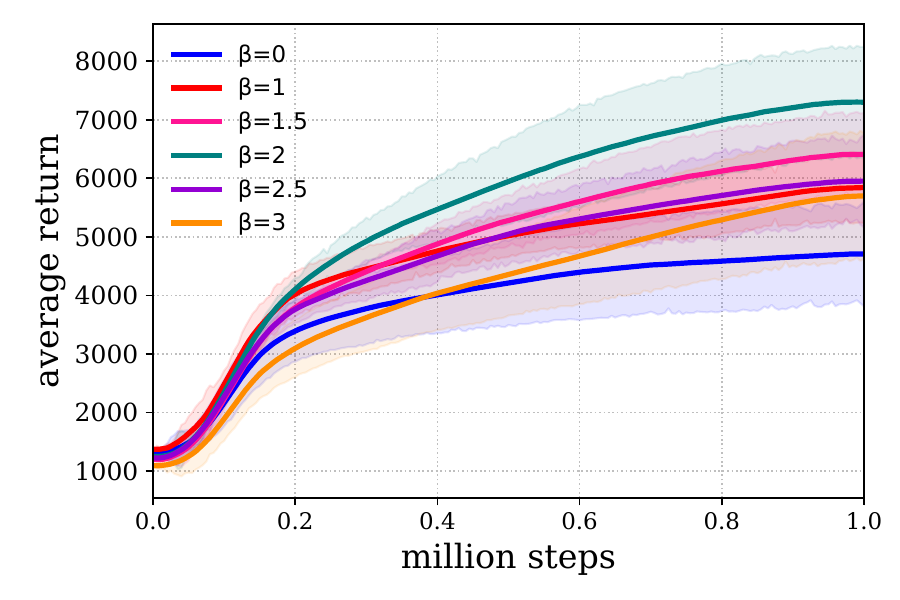}
        \caption{Sensitivity to $\beta_o$.}
        \label{fig:cheetahLambda_app}
    \end{subfigure}\hfill
    \begin{subfigure}{0.49\columnwidth}
        \centering
        \includegraphics[width=\linewidth]{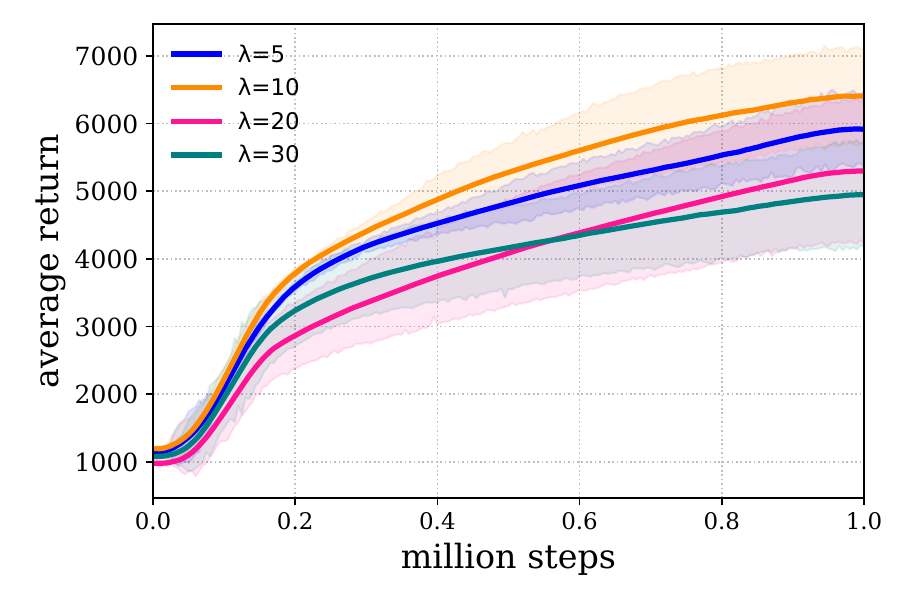}
        \caption{Sensitivity to $\lambda$.}
        \label{fig:betaCheetah_app}
    \end{subfigure}
    \caption{Sensitivity analysis of WBSAC to hyperparameters in HalfCheetah-v5 domain.}
    \label{fig:sensitivity2_app}
\end{figure}

\noindent\textbf{Hyperparameter sensitivity.} WBSAC relies on two hyperparameters, $\lambda$, which determines the transition rate from pessimistic strategy to optimistic behavior, and $\beta_o$, which controls the amount
of uncertainty used to compute the optimistic policy. Figure~\ref{fig:cheetahLambda_app} and Figure~\ref{fig:betaCheetah_app} demonstrate that the degree of optimism and the exploration schedule parameter affect the agent's performance. For very low values of $\lambda$ the policy remains pessimistic for longer, which slows down exploration and convergence to optimal behavior. Conversely, very high values of $\lambda$ lead to a rapid shift towards the optimistic policy and can cause unstable learning or convergence to poor local optima. Finally, removing the influence of critic disagreement on the optimistic actor's objective by setting the optimistic degree $\beta_o=0$ leads to a noticeable degradation in WBSAC's performance.

\section{Conclusion, limitations and future work}
\label{sec: Conclusion}

We present the WBSAC framework, a model-free deep reinforcement algorithm that uses dual actors. We use the Wasserstein barycenter of two pessimistic and optimistic policies to make a balance between exploration and exploitation. By adjusting the hyperparameters, WBSAC starts with a pessimistic strategy and gradually shifts to an optimistic behavior. This enhances sample efficiency and facilitates a smooth exploration process. This is validated through several experiments. 

Despite its promising performance, the WBSAC framework presents a few limitations. Using a dual actor network setup leads to increased memory and computational cost compared to the state-of-the-art off-policy actor-critic algorithms. Furthermore, for policies represented by Gaussian distributions, the barycenter can often be derived in a straightforward, closed-form analytical solution. However, for more complex, non-Gaussian policy representations, the computation of the Wasserstein barycenter typically lacks a closed form and requires iterative numerical approaches. A future direction for this work is to compensate for this cost by focusing on the effective sampling experience replay buffer data, potentially with replay ratio scaling.

% \section*{References}

\renewcommand*{\bibfont}{\footnotesize}
\bibliographystyle{plainnat}
\bibliography{main}

%%%%%%%%%%%%%%%%%%%%%%%%%%%%%%%%%%%%%%%%%%%%%%%%%%%%%%%%%%%%

\newpage
\appendix

\begin{center}
\textbf{Wasserstein Barycenter Soft Actor-Critic}
\end{center}

\section{Further discussion and broader impacts}

The WBSAC exploration policy entropy lower bound ensures that the transition from pessimistic to optimistic behavior maintains a minimum level of action diversity. This enhanced state-action coverage has practical implications beyond immediate performance gains. For instance, an agent trained on a wider variety of experiences is less likely to overfit to a narrow subset of the environment. This enables better generalization to new tasks and enhances adaptability in unfamiliar environments, which are essential for effective real-world deployment. WBSAC's mechanism of beginning with a pessimistic policy and gradually incorporating optimism could also offer a framework for safer exploration. The initial conservative behavior might help avoid catastrophic failures during early learning phases, especially in safety-critical systems.

\section{Proof of proposition 1}

\begin{proposition}For factorized Gaussian pessimistic and optimistic policies, the exploration policy $\pi_e$ (derived from (\ref{eq:barycenter_mean_wbsac}) and (\ref{eq:barycenter_cov_wbsac}) has its differential entropy, $H(\pi_e(s))$, lower-bounded for any state $s \in \mathcal{S}$ as:
\begin{equation}
\label{eq:entropy_inequality}
H\!\bigl(\pi_e(s)\bigr)\;\;\ge\;\;
\xi_p\,H\!\bigl(\pi_p(s)\bigr)+\xi_o\,H\!\bigl(\pi_o(s)\bigr).
\end{equation}
\end{proposition}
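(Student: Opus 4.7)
The plan is to exploit the fact that for factorized (diagonal-covariance) Gaussians, the barycenter formulas in Eqs.~\eqref{eq:barycenter_mean_wbsac}--\eqref{eq:barycenter_cov_wbsac} decouple across action dimensions, and then show that the remaining inequality is a one-dimensional statement of Jensen's inequality applied to the logarithm.

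First I would write out the differential entropy of a $d$-dimensional Gaussian in closed form, $H(\mathcal{N}(\mu,\Sigma)) = \tfrac{1}{2}\log\!\bigl((2\pi e)^{d}\det\Sigma\bigr)$, and observe that the mean terms $\mu_p,\mu_o,\mu_e$ do not appear. Because $\pi_p$ and $\pi_o$ are factorized, $\Sigma_p$ and $\Sigma_o$ are diagonal, so their principal square roots are diagonal and commute; therefore $\Sigma_e = (\xi_p \Sigma_p^{1/2}+\xi_o\Sigma_o^{1/2})^2$ is itself diagonal with $k$-th entry $\sigma_{e,k}^2 = (\xi_p\sigma_{p,k}+\xi_o\sigma_{o,k})^2$. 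Substituting the diagonal determinants, the desired inequality reduces, after the additive constants $\tfrac{d}{2}\log(2\pi e)$ cancel (using $\xi_p+\xi_o=1$), to the per-dimension statement
\begin{equation}
\log\!\bigl(\xi_p\sigma_{p,k}+\xi_o\sigma_{o,k}\bigr)\;\ge\;\xi_p\log\sigma_{p,k}+\xi_o\log\sigma_{o,k}, \qquad k=1,\dots,d.
\end{equation}

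Next I would invoke Jensen's inequality: since $\log$ is concave on $(0,\infty)$ and $\sigma_{p,k},\sigma_{o,k}>0$ with weights $\xi_p,\xi_o\ge 0$ summing to one, the above inequality holds immediately. Summing over $k$ and reassembling the constants recovers the claimed bound $H(\pi_e(s))\ge \xi_p H(\pi_p(s))+\xi_o H(\pi_o(s))$. I would also note the equality case — achieved when $\sigma_{p,k}=\sigma_{o,k}$ for every $k$, or when one of the weights is zero — which matches the intuition that no ``spreading'' occurs when the two policies already agree coordinatewise.

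Since the argument hinges only on concavity of $\log$ and the factorized structure, there is no substantive obstacle; the only mild care is justifying that the principal square roots commute (trivial here because both covariances are diagonal) so that $\Sigma_e$ has the clean diagonal form above. If one later wanted to extend the result to general (non-factorized) Gaussians, the corresponding inequality $\log\det\!\bigl((\xi_p\Sigma_p^{1/2}+\xi_o\Sigma_o^{1/2})^2\bigr)\ge \xi_p\log\det\Sigma_p+\xi_o\log\det\Sigma_o$ follows from concavity of $\log\det$ together with the Minkowski determinant inequality, but this generalization is not required by the statement.
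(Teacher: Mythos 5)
Your proof is correct and follows essentially the same route as the paper's: exploit the diagonal covariance structure to write each entropy as $\tfrac{d}{2}\ln(2\pi e)$ plus a sum of $\ln\sigma$ terms, reduce the claim to a per-dimension inequality, and close it with Jensen's inequality for the concave logarithm using $\xi_p+\xi_o=1$. Your added remarks on the equality case and the $\log\det$-concavity extension to non-factorized Gaussians go slightly beyond what the paper proves but do not change the argument.
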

\begin{proof}
Consider $d$-dimensional Gaussian pessimistic policy \(\pi_p(\cdot|s) = \mathcal{N}(\mu_p(s), \Sigma_p(s))\), and optimistic policy \(\pi_o(\cdot|s) = \mathcal{N}(\mu_o(s), \Sigma_o(s))\), with $\Sigma_p(s) = \operatorname{diag}(\sigma_{p,1}^2(s), \ldots, \sigma_{p,d}^2(s)$, and $\Sigma_o(s) = \operatorname{diag}(\sigma_{o,1}^2(s), \ldots, \sigma_{o,d}^2(s))$, where \(\sigma_{p,i}^2(s)\) and \(\sigma_{o,i}^2(s)\) are the variances along dimension \(i\). Considering the differential entropy definition as $H(\pi(s)) = \frac{1}{2} \ln \det (2 \pi e \Sigma(s))$ for $\pi(s)$, we start by pessimistic policy. Since \(\Sigma_p(s)\) is diagonal, its square root is \(\Sigma_p(s)^{1/2} = \operatorname{diag}(\sigma_{p,1}(s), \ldots, \sigma_{p,d}(s))\). Hence, 

\begin{equation}
\det \Sigma_p(s) = \prod_{i=1}^d (\sigma_{p,i}(s))^2
\end{equation}
and, 
\begin{equation}
\ln \det \Sigma_p(s) = 2 \sum_{i=1}^d \ln ( \sigma_{p,i}(s))
\end{equation}
Therefore, the entropy becomes:
\begin{equation}
H(\pi_p(s)) = \frac{1}{2} \ln \det (2 \pi e \Sigma_p(s)) = \frac{1}{2} \ln (2 \pi e)^d + \sum_{i=1}^d \ln \sigma_{p,i}(s)
\end{equation}

Similarly, $H(\pi_o(s))  = \frac{1}{2} \ln (2 \pi e)^d + \sum_{i=1}^d \ln \sigma_{o,i}(s)$. For the exploration policy \(\pi_e(\cdot|s) = \mathcal{N}(\mu_e(s), \Sigma_e(s))\), with covariance $\Sigma_e(s)$ derived from (\ref{eq:barycenter_cov_wbsac}), we have:
\begin{equation}
    \xi_p \Sigma_p(s)^{1/2} + \xi_o \Sigma_o(s)^{1/2} = \operatorname{diag}(\xi_p \sigma_{p,1}(s) + \xi_o \sigma_{o,1}(s), \ldots, \xi_p \sigma_{p,d}(s) + \xi_o \sigma_{o,d}(s))
\end{equation}

Therefore, $\Sigma_e(s) = \operatorname{diag}\left( (\xi_p \sigma_{p,1}(s) + \xi_o \sigma_{o,1}(s))^2, \ldots, (\xi_p \sigma_{p,d}(s) + \xi_o \sigma_{o,d}(s))^2 \right)$ and the entropy $H(\pi_e(s))$ becomes:

\begin{equation}
H(\pi_e(s)) = \frac{1}{2} \ln (2 \pi e)^d + \sum_{i=1}^d \ln (\xi_p \sigma_{p,i}(s) + \xi_o \sigma_{o,i}(s))
\end{equation}

To proof inequality \ref{eq:entropy_inequality}, consider:

\begin{equation}
\xi_p \sum_{i=1}^d \ln \sigma_{p,i}(s) + \xi_o \sum_{i=1}^d \ln \sigma_{o,i}(s)=\sum_{i=1}^d \left( \xi_p \ln \sigma_{p,i}(s) + \xi_o \ln \sigma_{o,i}(s) \right)
\end{equation}

Since the \(\ln(x)\) is concave for \(x > 0\), Jensen's inequality, for non-negative $\xi_p$, $\xi_o$ such that \(\xi_p + \xi_o = 1\), gives: 

\begin{equation}
\sum_{i=1}^d \left( \xi_p \ln \sigma_{p,i}(s) + \xi_o \ln \sigma_{o,i}(s) \right) \leq \sum_{i=1}^d \ln (\xi_p \sigma_{p,i}(s) + \xi_o \sigma_{o,i}(s))
\label{inequality}
\end{equation}

Because \(\xi_p + \xi_o = 1\), the constant term can be decomposed as:

\begin{equation}
\frac{1}{2} \ln (2 \pi e)^d = \xi_p \cdot \frac{1}{2} \ln (2 \pi e)^d + \xi_o \cdot \frac{1}{2} \ln (2 \pi e)^d
\label{constant_term}
\end{equation}

Adding (\ref{constant_term}) to each side of (\ref{inequality}), we obtain:

\begin{equation} \label{eq:entropy_inequality_fully_annotated} 
\begin{split}
& \xi_p \textcolor{black}{\underbrace{\left( \frac{1}{2} \ln\left((2\pi e)^d\right) + \sum_{i=1}^d \ln \sigma_{p,i}(s) \right)}_{\textcolor{blue}{H(\pi_p(s))}}}
+ \xi_o \textcolor{black}{\underbrace{\left( \frac{1}{2} \ln\left((2\pi e)^d\right) + \sum_{i=1}^d \ln \sigma_{o,i}(s) \right)}_{\textcolor{blue}{H(\pi_o(s))}}} \\ 
& \le \textcolor{black}{\underbrace{\frac{1}{2} \ln\left((2\pi e)^d\right) + \sum_{i=1}^d \ln\left(\xi_p \sigma_{p,i}(s) + \xi_o \sigma_{o,i}(s)\right)}_{\textcolor{blue}{H(\pi_e)}}}
\end{split}
\end{equation}

% \begin{equation}
% \begin{split}
% \xi_p \left( \frac{1}{2}(2 \pi e)^d + \sum_{i=1}^d \ln \sigma_{p,i}(s) \right) + \xi_o \left( \frac{1}{2} \ln (2 \pi e)^d + \sum_{i=1}^d \ln \sigma_{o,i}(s) \right) \\ \leq \frac{1}{2} \ln (2 \pi e)^d + \sum_{i=1}^d \ln (\xi_p \sigma_{p,i}(s) + \xi_o \sigma_{o,i}(s))
% \end{split}
% \end{equation}

Hence, 
\begin{equation}
\xi_p H(\pi_p(s)) + \xi_o H(\pi_o(s)) \leq  H(\pi_e(s))
\end{equation}

\end{proof}

\section{Experimental details}

For a fair comparison with baselines, we set the initial seeds of the computational packages and fix the seeds of the environments to ensure reproducibility of results given initial seed values. Additionally, instead of randomly sampling training and evaluation seeds, we force the seeds to come from disjoint sets \cite{ciosek2019better}. For WBSAC, all networks share the same architecture. We use two hidden layers of $256$ units, and the activation function is ReLU. All networks are trained with Adam optimizer \cite{kingma2014adam} with a learning rate of 3e-4. The hyperparameters of WBSAC are tuned specifically for the Ant-v5 case study using grid search over $\beta_o \in \{1,1.5,2,2.5,3,4\}$, $\lambda \in \{2, 6,10, 12, 15,20\}$. Hyperparameters were kept constant across tasks. All the hyperparameters of SAC, DARC, and OAC baselines are reported in Table. ~\ref{tab:hyperparams}. The results for our SAC baseline were generated using this publicly available PyTorch implementation from \url{https://github.com/denisyarats/pytorch_sac}. To ensure fair comparison, the hyperparameters for SAC baseline were configured as specified in the original SAC paper \cite{haarnoja2018soft}. For the DARC baseline, we use the source code available at \url{https://github.com/dmksjfl/SMR/blob/master/DARC.py}, and for reproducing
the OAC results we used \url{https://github.com/microsoft/oac-explore}. All experiments were conducted on NVIDIA A100-PCIE GPU with 40GB of RAM.

\begin{table}[!h]
\centering
\caption{Hyperparameters used across MuJoCo and DeepMind control suite Tasks.}
\label{tab:hyperparams}
\resizebox{0.7\columnwidth}{!}{%
\begin{tabular}{lcccc}
\toprule
\textbf{Hyperparameter} & \textbf{WBSAC} & \textbf{SAC}  & \textbf{DARC} & \textbf{OAC} \\
\midrule
Number of hidden layers            & 2               & 2       & 2       & 2    \\
Number of hidden nodes             & 256             & 256     & 256     & 256  \\
Activation                         & ReLU            & ReLU    & ReLU    & ReLU \\
Batch size                         & 256             & 256     & 256     & 256  \\
Replay buffer size                 & $10^6$          & $10^6$  & $10^6$  & $10^6$ \\
Discount factor                    & 0.99            & 0.99    & 0.99    & 0.99 \\
Target smoothing                   & 0.005           & 0.005   & 0.005   & 0.005\\
Optimizer                          & Adam            & Adam    & Adam    & Adam \\
Actor learning rate                & $3\times10^{-4}$ & $3\times10^{-4}$ & $3\times10^{-4}$ & $3\times10^{-4}$ \\
Critic learning rate               & $3\times10^{-4}$ & $3\times10^{-4}$ & $3\times10^{-4}$ & $3\times10^{-4}$ \\
Entropy coefficient                & $0.2$           & $0.2$   & N/A     & $0.2$  \\
Maximum log std                    & $2$             & $2$     & N/A     & $2$  \\
Minimum log std                    & $-20 $          & $-20 $  & N/A     & $-20$  \\
Regularization parameter           & N/A             & N/A     & $0.005$ & N/A\\
Noise clip                         & N/A             & N/A     & $0.5$   & N/A \\
Exploration noise                  & N/A             & N/A     & $\mathcal{N}(0, 0.1)$ & N/A  \\
Upper bound uncertainty coefficient      & $1.5$           & N/A     & N/A     & 4.36  \\
Exploration schedule $\lambda$     & $10$            & N/A     & N/A     & N/A  \\
Shift multiplier $\sqrt{2\delta}$ & N/A             & N/A     & N/A     & $3.69$  \\
\bottomrule
\end{tabular}}
\end{table}

\section{Ablation study}
\noindent\textbf{Dynamic and fixed $\xi_o$.} To evaluate the impact of our dynamic exploration mechanism, we provide a comparison between WBSAC using its variable weighting scheme for the Wasserstein barycenter and configurations that employ fixed weights. As can be seen clearly in Figure~\ref{fig:variable_weights_app}, our adaptive approach for varying the weights of the pessimistic and optimistic policies achieves better performance in comparison to configurations that bias the exploration policy to the pessimistic policy via a high $\xi_p$.

\begin{figure}[!h]
    \centering
    \includegraphics[width=0.5\columnwidth]{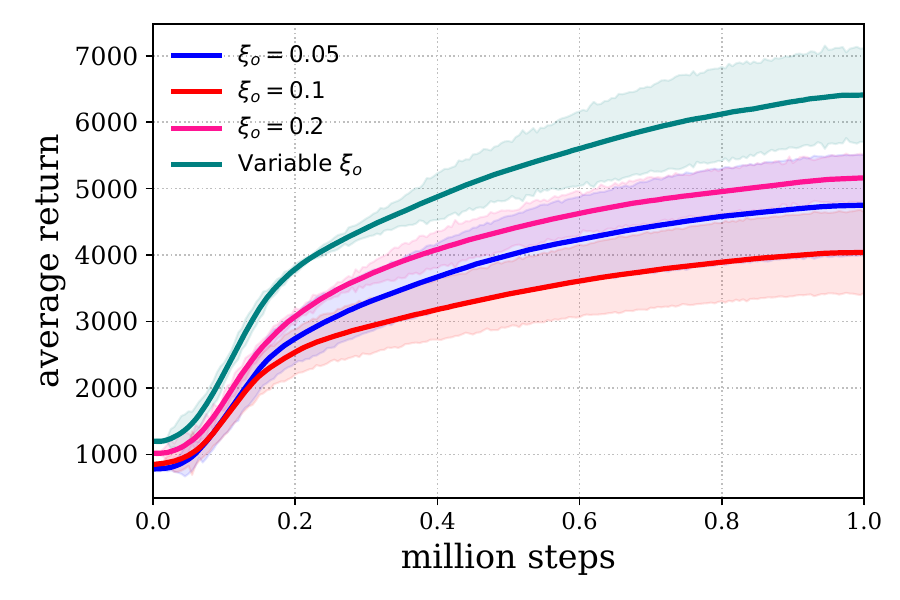}
    \caption{Ablation study on WBSAC performance under fixed and adaptive $\xi_o$.}
    \label{fig:variable_weights_app}
\end{figure}

\begin{figure}[!h]
  \centering
  \includegraphics[width=0.5\columnwidth]{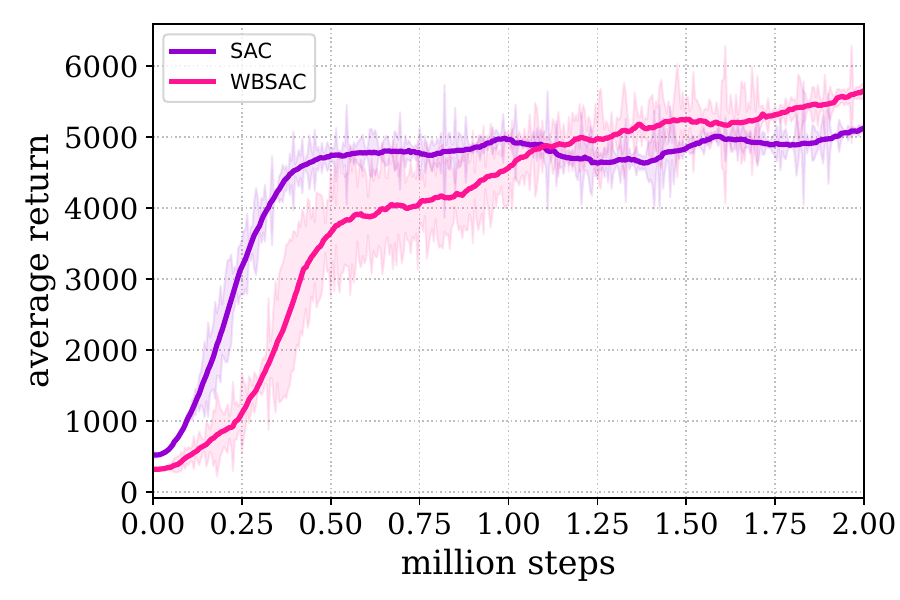}
  \caption{Performance comparison of WBSAC and SAC with $4$ gradient steps per environment interaction on the Humanoid task.}
  \label{fig:LUTD_app}
\end{figure}

\noindent\textbf{Performance with more training steps. }We further investigate the impact of increased update-to-data ratios (gradient steps per environment step). Figure~\ref{fig:LUTD_app} compares the performance of WBSAC and SAC with $4$ gradient steps per environment step, averaged over three seeds on the Humanoid task. WBSAC ultimately achieves a higher final average return than SAC under these experimental conditions.

\end{document}